\newcommand{\spanv}[1]{\textrm{sp}(#1)}
\newcommand{\adv}[2]{\textrm{adv}(#1,#2)}
\DeclareMathOperator*{\argmax}{arg\,max}
\DeclareMathOperator{\E}{\mathbb{E}}
\DeclareMathOperator{\st}{\mathrm{st}}
\theoremstyle{plain}
\newtheorem{theorem}{Theorem}[section]
\newtheorem{lemma}[theorem]{Lemma}
\newtheorem{corollary}[theorem]{Corollary}
\theoremstyle{definition}
\newtheorem{definition}[theorem]{Definition}
\newtheorem{assumption}[theorem]{Assumption}
\theoremstyle{remark}
\title{Geometric Re-Analysis of Classical MDP Solving Algorithms}
\author{Arsenii Mustafin\textsuperscript{1, $^\dagger$},
Aleksei Pakharev\textsuperscript{2, $^\dagger$},
Alex Olshevsky\textsuperscript{3}, 
Ioannis Ch. Paschalidis\textsuperscript{3}}
\quad\text{to}\quad
\keywords{RL Theory, MDP Geometry, Convergence Analysis.} 
\begin{document}

\makeCover  
\maketitle  

\begin{abstract}
We build on a recently introduced geometric interpretation of Markov Decision Processes (MDPs) to analyze classical MDP-solving algorithms: Value Iteration (VI) and Policy Iteration (PI). First, we develop a geometry-based analytical apparatus, including a transformation that modifies the discount factor $\gamma$, to improve convergence guarantees for these algorithms in several settings. In particular, one of our results identifies a rotation component in the VI method, and as a consequence shows that  when a Markov Reward Process (MRP) induced by the optimal policy is irreducible and aperiodic, the asymptotic convergence rate of value iteration is strictly smaller than $\gamma$. 

\end{abstract}

\section{Introduction}
\subsection{History of the Subject and Previous works}
A Markov Decision Process (MDP) is a widely used mathematical framework for sequential decision-making. It was first introduced in the late 1950s, along with foundational algorithms such as Value Iteration (VI) \citep{bellmandp} and Policy Iteration (PI) \citep{howard1960dynamic}. These algorithms have become the foundation for various theoretical and practical methods for solving MDPs, which today form the backbone of applied Reinforcement Learning (RL).

Over the following decades, significant advancements were made, culminating in a comprehensive summary of key results by Puterman in 1990 \citep{puterman1990markov}. In recent years, the growing popularity of practical RL algorithms has renewed interest in MDP analysis, leading to several notable developments.

For Value Iteration, \citet{howard1960dynamic} showed that the algorithm's convergence rate is upper bounded by the discount factor $\gamma$ and that this upper bound is achievable. However, in most practical cases, VI exhibits faster convergence. Subsequent works focused on analyzing this convergence and providing guarantees for MDP instances under additional assumptions \citep{puterman1990markov, feinberg2014value}.

For Policy Iteration, a significant gap in understanding its convergence properties remains. Important progress was made by \citet{ye2011simplex, hansen2013strategy, scherrer2013improved}, where the authors significantly improved the upper bound on the number of iterations required for convergence in terms of $1-\gamma$, where $\gamma$ is the MDP discount factor. At the same time, a separate line of work \citep{fearnley2010exponential, hollanders2012complexity, hollanders2016improved} showed that the complexity of PI can be exponential when the discount factor $\gamma$ is not fixed.

In this paper, we extend the analysis of VI and PI by leveraging the recently introduced geometric interpretation of MDPs \citep{mdp_geometry}. In their work, the authors proposed viewing MDPs from a geometric perspective, drawing analogies between common MDP problems and geometric problems. We build on this approach to develop new analytical methods for studying Value Iteration and Policy Iteration. These tools allow us to simplify the analysis and improve convergence results in several cases.

\subsection{Motivation and Contribution}

The primary motivation for this paper is to address existing gaps in the understanding and analysis of fundamental MDP algorithms. In the case of Value Iteration, the gap lies between the convergence rate observed in most settings and the theoretically guaranteed convergence rate. For Policy Iteration, the gap exists between its upper and lower convergence bounds. The geometry-based analysis proposed in this work enhances our understanding of algorithm dynamics and has the potential to guide the design of new algorithms.

Our main contributions are as follows: 
\begin{itemize} 
\item We develop a new geometry-based framework for analyzing the convergence of Value Iteration and Policy Iteration. In particular, we identify a rotation component in the Value Iteration algorithm and introduce an MDP transformation that modifies the discount factor $\gamma$. 
\item Using the discount factor transformation, we show that the theoretical convergence of both VI and PI can be improved when the transformation can be applied in a way that preserves the regularity of the MDP.
\item We show that in the case of a 2-state MDP, the number of iterations required by PI to reach the optimal policy is upper bounded by the number of actions\footnote{For us, actions are unique to a state, so using earlier terminology (which we do not use in this paper), the claim is that the number of iterations is upper bounded by the number of state-action pairs.}.
\item For Value Iteration, we show that it benefits from information exchange between states, leading to a convergence rate faster than $\gamma$ when the Markov reward process (MRP) induced by the optimal policy is strongly connected. In this case we improve the total number of iterations from:
\begin{equation*}
    \mathcal{O}\left( \frac{\log{1/\epsilon} + \log (1/(1-\gamma)) }{\log{(1/\gamma)}} \right) \quad\text{to}\quad
    \mathcal{O}\left(\frac{\log (1/\epsilon) + \log(1/(1-\gamma)) }{\log (1/\gamma) + 
    \textcolor{blue}{\log(1/\tau^{1/N})}} \right),
\end{equation*}
\end{itemize} where $\tau^{1/N}$ is a  measure of the mixing rate associated with an optimal policy. While the former convergence rate on the left blows up polynomially as $\gamma \rightarrow 1$ (due to the $\log (1/\gamma)$ in the denominator which approaches zero as $(\gamma - 1)/\gamma$), the new convergence on the right rate blows up {\em logarithmically} as $\gamma \rightarrow 1$.

Additionally, we give simplified geometry-based proofs for a several established facts.

\section{Mathematical setting} \label{sec:math_background}
\subsection{Basic MDP setting}

We employ an MDP framework from \cite{mdp_geometry}. An MDP is defined by the tuple $\mathcal{M} = \langle \mathcal{S}, \mathcal{A}, \mathrm{st}, \mathcal{P}, \mathcal{R}, \gamma \rangle$, where $\mathcal{S} = \{s_1, \ldots, s_n\}$ represents a finite set of $n$ states, and $\mathcal{A}$ is a finite set containing $m$ possible actions, where each action is defined in a unique state. Therefore, actions in the framework correspond to state-action pairs in earlier literature. This relation is defined by a mapping $\mathrm{st}$, which maps actions $a$ to the state where it can be chosen. Each action $a$ is characterized by the probability distribution $\mathcal{P}(a) = (p^a_1, \dots, p^a_n)$ and deterministic rewards $r^a$ which are described by $\mathcal{R}: \mathcal{A} \to \mathbb{R}$.

An agent interacting with the MDP follows a policy, which is a map $\pi: \mathcal{S} \rightarrow \mathcal{A}$ that satisfies $\mathrm{st}(\pi(s)) = s$ for all $s \in \mathcal{S}$. We consider deterministic stationary policies, where a single action is chosen for each state throughout the trajectory. If a policy $\pi$ chooses action $a$ in state $s$, we write $a \in \pi$. Therefore, a policy $\pi$ can be described as the set of its actions, $\pi = \{a_1, \dots, a_n\}$.

The value of a policy $\pi$ at state $s$, denoted $V^\pi(s)$, is the expected discounted reward over infinite trajectories starting from $s$ under policy $\pi$: 
\begin{equation*} V^\pi(s) = \E \left[ \sum_{t=1}^{\infty} \gamma^t r_t \right],
\end{equation*}
where $r_t$ is the reward at time $t$. The value vector $V^\pi$ uniquely satisfies the Bellman equation $T^\pi V^\pi = V^\pi$, where $T^\pi$ is the Bellman operator: 
\[ (T^\pi V)(s) = r_\pi + \sum_{s'} P\left(s'\,|\,\pi(s)\right) \gamma V(s'). \]
Evaluating the values of a given policy $\pi$ is known as the Policy Evaluation problem. The main challenge, however, is to identify the \textbf{optimal} policy $\pi^*$ that satisfies: 
\[ V^{\pi^*} (s) \ge V^\pi(s), \quad \forall \pi, s. \]
This policy can be found by the Policy Iteration algorithm, but each iteration of it requires inverting an $n \times n$ matrix. Alternatively, we may aim to find an approximate solution, \textbf{$\epsilon$-optimal} policy $\pi^\epsilon$, which satisfies: 
\[ V^{\pi^*} (s) - V^{\pi^\epsilon}(s) < \epsilon, \quad \forall s. \]
An $\epsilon$-optimal policy can be found using the Value Iteration (VI) algorithm. The required number of iterations depends on $\epsilon$, with each iteration having a computational complexity of $\mathcal{O}(nm)$.

MDP setting presented above was reinterpreted in geometric terms in \cite{mdp_geometry}, and our analyzis relies on this geometric interpretation. In this work the authors suggest to view MDP actions as vectors in a linear space called \textbf{action space}. To construct the $(n+1)$-dimensional action vector $a^+$ from an action $a$, one needs to write action reward as the first entry of the vector, which the authors refer to as the $0$-th coordinate --- $c_0 = r^a$. Then, the next $n$ entries of the action vector are equal to $c^a_i = \gamma p^a_i$, except the $s$-th entry, where $s = \st(a)$ is the state of $a$. This entry is modified to be $c^a_s = \gamma p^a_s -1$. Therefore, the sum of $n$ last entries of any action vector is equal to $\gamma -1$, while the entry corresponding to the action's state is negative and all other entries are positive.

Then the authors define policy vectors. For each policy $\pi$ the vector $V^\pi_+ = (1, V^\pi(1), \dots, V^\pi(n))^T$ is composed of the policy values in all states and an extra bias coordinate with the value $1$. Then in the geometric sense a policy $\pi$ can be represented as the hyperplane $\mathcal{H}^\pi$ of all vectors orthogonal to $V^\pi_+$. Note that such a hyperplane can be constructed for any vector of values $V_+$, \textit{i.e.} we do not need an actual policy --- action choice rule --- to construct such hyperplane. The hyperplanes which are produced by set of values without actual actions behind them are called \textbf{pseudo-policies}.

For any action vector $a^+$ and policy vector $V^\pi_+$ the inner product $a^+V^\pi_+$ is equal to the advantage of action $a$ with respect to policy $\pi$, the key quantity in an MDP. It was shown in \cite{mdp_geometry} that the dynamics of the VI and PI algorithms are determined by advantages. It is also shown that the transformation procedure $\mathcal{L}^\delta_s$, which shifts all policy values at $s$ by $\delta$, preserves advantages. Additionally, the transformation preserves the stopping and filtering criteria that we use. As a result, it maintains the dynamics of the PI and VI algorithms. This implies that for any MDP $\mathcal{M}$, we can consider its \textbf{normalization $\mathcal{M}^*$}, the MDP obtained from $\mathcal{M}$ by a series of $\mathcal{L}$ transformations where all values of its optimal policy are equal to $0$. It follows that the PI and VI algorithms will exhibit the same dynamics on $\mathcal{M}$ and $\mathcal{M}^*$.

\textbf{Note:} In this paper we carry out the analysis on normalized MDPs, $\mathcal{M} = \mathcal{M}^*$. In particular, it implies that $r^a=0 \,  \forall a \in \pi^*$, $r^b <0 \, \forall b \notin \pi^*$.



\subsection{Algorithms} \label{ssec:algorithms}

\begin{algorithm}[t]
   \caption{Value Iteration Algorithm}
   \label{alg:value_iter}
\begin{algorithmic}
   \STATE {\bfseries Parameters} Learning rate $\alpha$, desired accuracy $\epsilon$, stopping criterion $H(\mathcal{I})$ and action filtering rule $F(\cdot| \mathcal{I})$.
   \STATE {\bfseries Initialize} $V_0$, set $t=0$ and $\mathcal{A}_0 = \mathcal{A}$.
   \STATE {\bfseries Iteration} Select $\mathcal{S}_t$
   \STATE Compute $U = \max_{a \in \mathcal{A}_t} r^a + \gamma \sum_{i=1}^n p^a_i V_t(i)$,
   \STATE Compute $V_{t+1} (s) =
   \begin{cases}
    (1-\alpha) V_{t} + \alpha U &\text{ if } s \in \mathcal{S}_t,\\
    V_t(s) \quad \quad &\text{ if } s \notin \mathcal{S}_t.       
   \end{cases}$
   
   \STATE Apply filtering $\mathcal{A}_{t+1} = F(\mathcal{A}_t|\mathcal{I}_t)$
   \IF{not $H(\mathcal{I}_t)$}
    \STATE Increment $t$ by 1 and return to the Iteration step.
   \ELSE
    \STATE Output $\pi:\,\pi(s) = \argmax_{a \in \mathcal{A}_t}  r(s,a) + \gamma P_a(s)V_{t}$.
    \ENDIF
\end{algorithmic}
\end{algorithm}

The Value Iteration algorithm (Algorithm \ref{alg:value_iter}) is presented in a non-standard, more general form, which allows for multiple versions of it to be discussed. In this algorithm:
\begin{itemize}
\item $\mathcal{I}$ is all information which reflects the overall state of the algorithm, which might include quantities one wants to track during the run of the algorithm. In particular, $\mathcal{I}_t$ is the information available after $t$ iterations of the algorithm.
\item $H(\mathcal{I})$ is the stopping criterion, logical function of the system information, the output of which is \textit{true} when algorithm execution should be stopped. The choices we consider are: time-based $H(\mathcal{I}_t): t=T_{\rm max}$ maximum number of iterations reached; span-based $H(\mathcal{I}_t): \spanv{V_t - V_{t-1}} \le \epsilon(1-\gamma)/\gamma$, which allows to obtain $\epsilon$-optimal policy; action-based $H(\mathcal{I}_t): |\mathcal{A}_t|=n$ is used when action filtering is applied, it allows to obtain true optimal policy.
\item $F(\cdot| \mathcal{I})$ is the action filtering function. We add it to reflect certain criteria shrinking the pool of possible actions participating in the optimal policy as $t$ increases.
This technique also allows to stop at the exact solution under certain assumptions.
\item $\mathcal{S}_t$ are the states to be updated during the iteration $t$. If $\mathcal{S}_t = \mathcal{S}$ for all $t$, we call the update synchronous, otherwise the update is called asynchronous.
\item $\alpha$ is the learning rate of the algorithm. We only consider algorithms with constant learning rate.
\end{itemize}
We call a version of Value Iteration with synchronous update, learning rate $\alpha=1$ and without action filtering \textit{standard}.

\begin{algorithm}[t]
   \caption{Policy Iteration Algorithm}
   \label{alg:pol_iter}
\begin{algorithmic}
   \STATE {\bfseries Parameters} None.
   \STATE {\bfseries Initialize} $\pi_0$, set $t=0$.
   \STATE {\bfseries Iteration} Construct $P_t$ and $r_t$ from $\pi_t$ 
   \STATE \quad \textbf{Policy Evaluation:} Compute $V_{t+1} = (I - \gamma P_t)^{-1}r_t$,
   \STATE \quad \textbf{Policy Improvement:} Construct $\pi_{t+1}: \pi_{t+1}(s) = \argmax \adv{a}{V_{t+1}}$.
   \IF{$\pi_{t+1} \ne \pi_t$}
    \STATE Increment $t$ by 1 and return to the Iteration step.
   \ELSE
    \STATE Output $\pi_t$.
    \ENDIF
\end{algorithmic}
\end{algorithm}

As for the Policy Iteration algorithm (PI), in this paper we consider the standard version of it from \cite{howard1960dynamic} (Algorithm \ref{alg:pol_iter}), which updates actions in all states simultaneously (Howard PI).

\section{Transformation of the Discount Factor $\gamma$}

\begin{figure*}[ht] 
\begin{center}
\includegraphics[width=\textwidth]{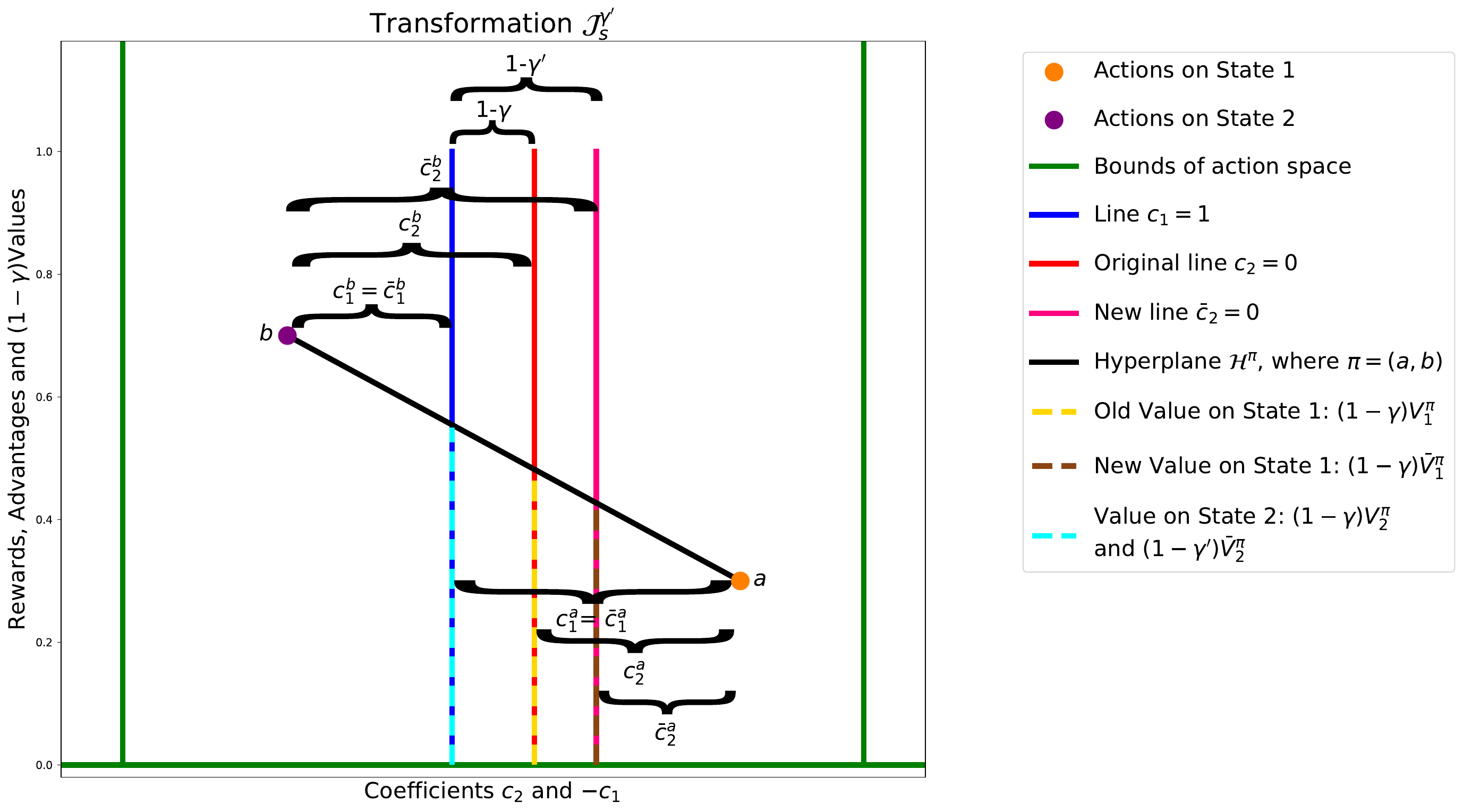}
\end{center}
\caption{Illustration of a transformation $\mathcal{J}_s^{\gamma'}$ in the case of 2-state MDP, where $s=2$ and the discount factor is updated from $\gamma$ to $\gamma'$. Dots $a$ and $b$ on the plot represent actions of the MDP on the states $1$ and $2$ resp., the $x$-axis is equal to $c_1 = \gamma - 1 - c_2$, and the $y$-axis is equal to the reward of an action. Blue and teal lines lie on $c_1 = \bar{c}_1 = 0$, red and yellow lines lie on $c_2 = 0$, and purple and brown lines lie on $\bar{c}_2 = 0$. The distance between blue and magenta lines is $1-\gamma$, and the distance between blue and red lines is $1-\gamma'$.
After the transformation, action coefficients related to state 1 remain unchanged ($c^a_1 = \bar{c}^a_1$, $c^b_1 = \bar{c}^b_1$) while those related to state 2 change by $\gamma' - \gamma$. The value on state 2 is equal to the length of the cyan bar divided by $1-\gamma$ before the transformation, and divided by $1-\gamma'$ after the transformation. The value on state 1 is more easily accessed using the value on state 2 as reference. Before the transformation, $V_1^\pi - V_2^\pi$ is equal to the difference of cyan and brown bars divided by $1-\gamma$, while $\bar{V}_1 - \bar{V}_2$ is equal to the difference of cyan and yellow bars divided by $1-\gamma'$.  This implies that the actual difference does not change: $V_1^\pi - V_2^\pi = \bar{V}_1^\pi - \bar{V}_2^\pi$.}
\label{fig:J_transformation}
\end{figure*}

In this section we define an MPD transformation $\mathcal{J}_s^{\gamma'}$ that changes the discount factor from $\gamma$ to $\gamma'$. Geometrically, it does not move the action vector or policy hyperplanes, but changes the coordinate $c_s$ corresponding to state $s$ by moving the corresponding zero level vertical hyperplane, consequently changing the coefficients $c_s$ and values on all states (Figure \ref{fig:J_transformation}). Denote $\gamma'$ as the new discount factor. Then the transformation rule for every action $a$ and policy $\pi$ is as follows:

\begin{itemize}
    \item The reward $r^a$ remains unchanged.
    \item Every coefficient $c_i^a, i \ne s$ remains unchanged.
    \item The new coefficient $\bar{c}_s^a$ corresponding to state $s$ changes to $\bar{c}_s^a := c_s^a - (\gamma-\gamma')$.
    \item The new value $\bar{V}^\pi(s)$ of every policy on state $s$ is set to $\bar{V}^\pi(s) = V^\pi(s) \frac{1-\gamma}{1-\gamma'}$.
    \item The value $\bar{V}^\pi (i)$ of every policy on every other state $i \ne s$ is being transformed such that the value differences are preserved: $\bar{V}^\pi (i) = V^\pi(i) + (\bar{V}^\pi (s) - V^\pi(s))$.
\end{itemize}

The key property of the transformation $\mathcal{J}_s^{\gamma'}$ is that it, similarly to transformation $\mathcal{L}_s^\delta$, preserves the key quantities characterizing MDP dynamics, which is stated in the following theorem.

\begin{theorem} \label{thm:J_preserve}
Transformation $\mathcal{J}_s^{\gamma'}$ preserves $(1)$ advantage $\adv{a}{\pi}$ of any action $a$ with respect to any policy $\pi$; $(2)$ preserves the vector span $\spanv{V^\pi}$, for any pseudo-policy $V^\pi$.
\end{theorem}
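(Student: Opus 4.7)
I would compute both quantities directly from the inner-product characterization of the advantage recalled in the excerpt, namely $\adv{a}{\pi} = a^+ \cdot V^\pi_+ = r^a + \sum_{i=1}^n c_i^a V^\pi(i)$, together with the structural identity $\sum_{i=1}^n c_i^a = \gamma - 1$ shared by every action vector. The key observation that drives everything is that the transformation rule prescribes the \emph{same} additive shift on every value coordinate: writing $\kappa := \bar V^\pi(s) - V^\pi(s)$, the rule $\bar V^\pi(i) = V^\pi(i) + (\bar V^\pi(s) - V^\pi(s))$ for $i \neq s$ trivially extends to $i = s$, giving $\bar V^\pi = V^\pi + \kappa \mathbf{1}$.

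Part (2) is then immediate: a uniform translation by $\kappa \mathbf{1}$ leaves $\max_i V(i) - \min_i V(i)$ unchanged, so $\spanv{\bar V^\pi} = \spanv{V^\pi}$. Nothing in this step uses that $V^\pi$ arises from an actual policy, so the argument covers pseudo-policies without change.

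For Part (1) I would expand the transformed advantage $r^a + \sum_{i\ne s} c_i^a \bar V^\pi(i) + \bar c_s^a \bar V^\pi(s)$, substitute the defining formulas $\bar c_s^a = c_s^a - (\gamma-\gamma')$ and $\bar V^\pi = V^\pi + \kappa\mathbf{1}$, and regroup. The inner-product structure cleanly decomposes the result into three pieces: the original advantage $\adv{a}{\pi}$; a ``uniform shift'' contribution $\kappa \sum_{i=1}^n c_i^a = (\gamma-1)\kappa$, where the identity $\sum c_i^a = \gamma - 1$ eliminates all action dependence; and a ``coefficient shift'' contribution $-(\gamma-\gamma')\bar V^\pi(s)$. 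The action-dependent terms have already dropped out, leaving a single scalar correction. Substituting the explicit $\bar V^\pi(s) = V^\pi(s)(1-\gamma)/(1-\gamma')$ gives $\kappa = V^\pi(s)(\gamma'-\gamma)/(1-\gamma')$, after which the two remaining scalar terms cancel exactly.

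There is no deep obstacle here: the computation is pure bookkeeping, and the only point that needs care is the $i = s$ term, where both the value and the coefficient change simultaneously. The real conceptual content lies in the design of the transformation itself, which was arranged precisely so that the value update is a uniform shift (pairing with the identity $\sum c_i^a = \gamma-1$) while the coefficient update is confined to a single coordinate (pairing with the explicit formula for $\bar V^\pi(s)$). Once those two pairings are invoked, essentially nothing is left to prove.
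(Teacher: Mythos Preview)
Your proof is correct and follows essentially the same approach as the paper: both rely on the identities $\sum_i c_i^a=\gamma-1$ and $(\gamma-1)V^\pi(s)=(\gamma'-1)\bar V^\pi(s)$ together with the preservation of value differences. The paper organizes the algebra slightly differently---rewriting $\adv{a}{\pi}=r^a+(\gamma-1)V^\pi(s)+\sum_i c_i^a(V^\pi(i)-V^\pi(s))$ and checking that each piece is separately invariant---but this is just a regrouping of your expand-and-cancel computation.
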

\begin{proof}
The proof is given in Appendix \ref{app:J_preserve_proof}.
\end{proof}

The significance of Theorem \ref{thm:J_preserve} is implied by the dependency of the convergence guarantees provided for the Value Iteration and Policy iteration algorithms on the discount factor $\gamma$. Therefore, if we can safely decrease $\gamma$, it gives an immediate yield in terms of a faster guaranteed convergence. In fact, to perform the transformation and decrease $\gamma$ safely, by which we mean that the coefficients constraint holds (only one of the coefficients is negative), we need that for some state $i$ all correspondent coefficients $c_i^a$ are positive, $\exists \,i: c_i^a > 0\,\forall a, \st{(a)}\ne i$. Then we can decrease $\gamma$ by $\min_a c_i^a, \st{(a)}\ne i$. It implies the following definition:

\begin{definition}
The \textbf{effective} value $\gamma_{\rm eff}$ of the discount factor of MDP $\mathcal{M}$ is the minimum possible value of $\gamma$ that can be obtained by applying safe transformations $\mathcal{J}_s^{\gamma'}$.
\end{definition}

With this definition we have two corollaries regarding the convergence of PI and VI algorithms

\begin{corollary}
The number of iterations $T_{PI}$ required for the Policy Iteration algorithm to output the optimal policy can be upper bounded by:
\begin{equation*}
    T_{PI} =\mathcal{O}\left(\frac{|A|}{1-\gamma_{\rm eff}} \right) \le 
    \mathcal{O}\left(\frac{|A|}{1-\gamma}\right) 
\end{equation*}
\end{corollary}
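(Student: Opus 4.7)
The plan is to reduce the claim to the established $\mathcal{O}(|A|/(1-\gamma))$ bound on Policy Iteration iterations from the Ye / Hansen et al. / Scherrer line of work cited in the introduction, applied not to $\mathcal{M}$ itself but to a transformed MDP whose discount factor has been driven down to $\gamma_{\rm eff}$. First I would apply the finite sequence of safe transformations $\mathcal{J}_s^{\gamma'}$ that, by the definition of $\gamma_{\rm eff}$, lowers the discount factor from $\gamma$ to $\gamma_{\rm eff}$ while keeping the resulting MDP $\bar{\mathcal{M}}$ regular. Since $\mathcal{J}_s^{\gamma'}$ only alters rewards, one coordinate of each action vector, and the values, the action set $\mathcal{A}$ and its cardinality $|A|$ are unchanged, so it suffices to bound PI iterations on $\bar{\mathcal{M}}$ in terms of $|A|$ and $\gamma_{\rm eff}$.

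Next I would argue that running Howard PI on $\mathcal{M}$ and on $\bar{\mathcal{M}}$ from the same initial policy $\pi_0$ produces identical sequences of policies $\pi_0,\pi_1,\pi_2,\ldots$. The policy improvement step of Algorithm \ref{alg:pol_iter} is $\pi_{t+1}(s) = \argmax_{a} \adv{a}{V_{t+1}}$, and Theorem \ref{thm:J_preserve} guarantees that $\adv{a}{\pi}$ is preserved under every application of $\mathcal{J}_s^{\gamma'}$, for every action $a$ and every (pseudo-)policy $\pi$. Consequently, the argmax set at each state, the resulting $\pi_{t+1}$, and the termination condition $\pi_{t+1} = \pi_t$ all agree across the two runs, so the iteration count $T_{PI}$ is literally the same for $\mathcal{M}$ and for $\bar{\mathcal{M}}$.

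Finally I would invoke the standard bound applied to $\bar{\mathcal{M}}$ to obtain $T_{PI} = \mathcal{O}(|A|/(1-\gamma_{\rm eff}))$, and the outer inequality $\mathcal{O}(|A|/(1-\gamma_{\rm eff})) \le \mathcal{O}(|A|/(1-\gamma))$ follows immediately from $\gamma_{\rm eff} \le \gamma$. The main obstacle is really bookkeeping rather than a hard technical step: one must verify that the advantage invariance in Theorem \ref{thm:J_preserve} is strong enough to preserve the combinatorial structure of the $\argmax$ sets at each state (including any fixed tie-breaking rule) and that the safe-transformation requirement does not silently shrink $\mathcal{A}$, so that the two PI trajectories coincide as sequences of policies and not merely in their lengths.
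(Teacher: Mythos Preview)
Your proposal is correct and follows precisely the argument the paper intends: the corollary is stated without an explicit proof, but the surrounding text makes clear it is meant to follow from Theorem~\ref{thm:J_preserve} (advantage invariance under $\mathcal{J}_s^{\gamma'}$), the earlier remark that PI dynamics are determined by advantages, and the cited Ye/Hansen et al./Scherrer bound applied to the transformed MDP with discount $\gamma_{\rm eff}$. One small slip: you say $\mathcal{J}_s^{\gamma'}$ alters rewards, but the definition keeps $r^a$ unchanged; this does not affect your argument.
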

To the best of our knowledge, this result is novel.

\begin{corollary}
The number of iterations $T_{VI}$ required for the standard Value Iteration algorithm to converge to the $\epsilon$-optimal policy can be upper bounded by:
\begin{align*}
T_{VI}&=\mathcal{O}\left( \frac{\log{(1/\epsilon)} + \log (1/(1-\gamma_{\rm eff})) }{\log{(1/\gamma_{\rm eff})}} \right)  \le \mathcal{O}\left( \frac{\log{(1/\epsilon)} + \log (1/(1-\gamma)) }{\log{(1/\gamma)}} \right) \\
\end{align*}
\end{corollary}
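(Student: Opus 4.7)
The plan is to bootstrap off the standard Value Iteration convergence bound together with Theorem \ref{thm:J_preserve}. The well-known bound $T_{VI}=\mathcal{O}\!\left(\frac{\log(1/\epsilon)+\log(1/(1-\gamma))}{\log(1/\gamma)}\right)$ for standard VI is already established in the literature (e.g.\ via the $\gamma$-contraction of the Bellman operator in span semi-norm together with the span-based stopping criterion $\spanv{V_t - V_{t-1}} \le \epsilon(1-\gamma)/\gamma$). The right-hand inequality is immediate once the left-hand bound is established, since the map $\gamma \mapsto \frac{\log(1/\epsilon) + \log(1/(1-\gamma))}{\log(1/\gamma)}$ is monotonically increasing in $\gamma$ on $(0,1)$, and $\gamma_{\rm eff} \le \gamma$ by definition.

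The substantive step is to justify that VI on the original MDP $\mathcal{M}$ behaves, for convergence-counting purposes, exactly like VI on the transformed MDP $\bar{\mathcal{M}}$ obtained by applying a sequence of safe transformations $\mathcal{J}_s^{\gamma'}$ that achieves the discount factor $\gamma_{\rm eff}$. Here I would invoke Theorem \ref{thm:J_preserve}: since each $\mathcal{J}_s^{\gamma'}$ preserves advantages $\adv{a}{\pi}$ for every action $a$ and (pseudo-)policy $\pi$, the $\argmax$ in the VI update is identical on $\mathcal{M}$ and $\bar{\mathcal{M}}$, so the same action is selected at every state in every iteration. Furthermore, since the transformation preserves $\spanv{V^\pi}$ for any pseudo-policy, the span-based stopping criterion triggers on $\mathcal{M}$ at iteration $t$ if and only if it triggers on $\bar{\mathcal{M}}$ at iteration $t$ (after replacing the threshold $\epsilon(1-\gamma)/\gamma$ with $\epsilon(1-\gamma_{\rm eff})/\gamma_{\rm eff}$, which corresponds to the same $\epsilon$-optimality of the resulting policy since advantages and hence the set of $\epsilon$-optimal policies are preserved).

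Putting this together, the number of iterations run by standard VI on $\mathcal{M}$ with target accuracy $\epsilon$ is equal to the number of iterations run by standard VI on $\bar{\mathcal{M}}$ with discount factor $\gamma_{\rm eff}$ and the same target accuracy. Applying the classical bound on $\bar{\mathcal{M}}$ yields the left-hand estimate, and the right-hand estimate follows from $\gamma_{\rm eff} \le \gamma$ and monotonicity.

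The main obstacle I anticipate is the bookkeeping around the span-based stopping criterion: one must check carefully that the rescaling of values in the $\mathcal{J}$ transformation, which scales $V^\pi(s)$ by $(1-\gamma)/(1-\gamma')$ on the affected state and shifts other states accordingly, does indeed leave $\spanv{V_t - V_{t-1}}$ equivariant under the appropriate change in threshold, and that ``$\epsilon$-optimality'' translates cleanly between $\mathcal{M}$ and $\bar{\mathcal{M}}$. This should follow from part (2) of Theorem \ref{thm:J_preserve} applied to the pseudo-policy $V_t - V_{t-1}$ (since span is shift-invariant, what matters is the invariance of differences), but it is the place where a careful proof must pause.
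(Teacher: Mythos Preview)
Your proposal is correct and follows essentially the same route as the paper, which in fact states this corollary directly after Theorem~\ref{thm:J_preserve} without an explicit proof: the intended argument is precisely that advantage- and span-preservation under $\mathcal{J}_s^{\gamma'}$ make the VI trajectory on $\mathcal{M}$ identical to that on the transformed MDP $\bar{\mathcal{M}}$ with discount $\gamma_{\rm eff}$, so the classical bound applies with $\gamma_{\rm eff}$ in place of $\gamma$.

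One small sharpening of your final paragraph: the quantity $V_{t+1}(s)-V_t(s)$ is exactly the maximal advantage $\adv{a}{V_t}$ at state $s$, so part~(1) of Theorem~\ref{thm:J_preserve} already gives that the entire vector $V_{t+1}-V_t$ (and hence its span) is \emph{identical} in $\mathcal{M}$ and $\bar{\mathcal{M}}$, not merely equivariant; you do not need to invoke part~(2) on this difference vector. The only place where the thresholds differ is the factor $(1-\gamma)/\gamma$ versus $(1-\gamma_{\rm eff})/\gamma_{\rm eff}$, and as you note this corresponds to the same $\epsilon$-optimality because the transformation rescales value gaps accordingly (the residual constant is absorbed in the $\mathcal{O}(\cdot)$).
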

This result might be seen as a slight improvement over Corollary 6.6.8 from \cite{puterman2014markov}  and Theorem 1 from \cite{feinberg2020complexity} since we do not require the actions on the same state to be considered. Additionally, the proof we give here is significantly simpler.

\section{Policy Iteration in Two-State MDP} \label{sec:PI}

\begin{figure*}[ht] 
\begin{center}
\includegraphics[width=\textwidth]{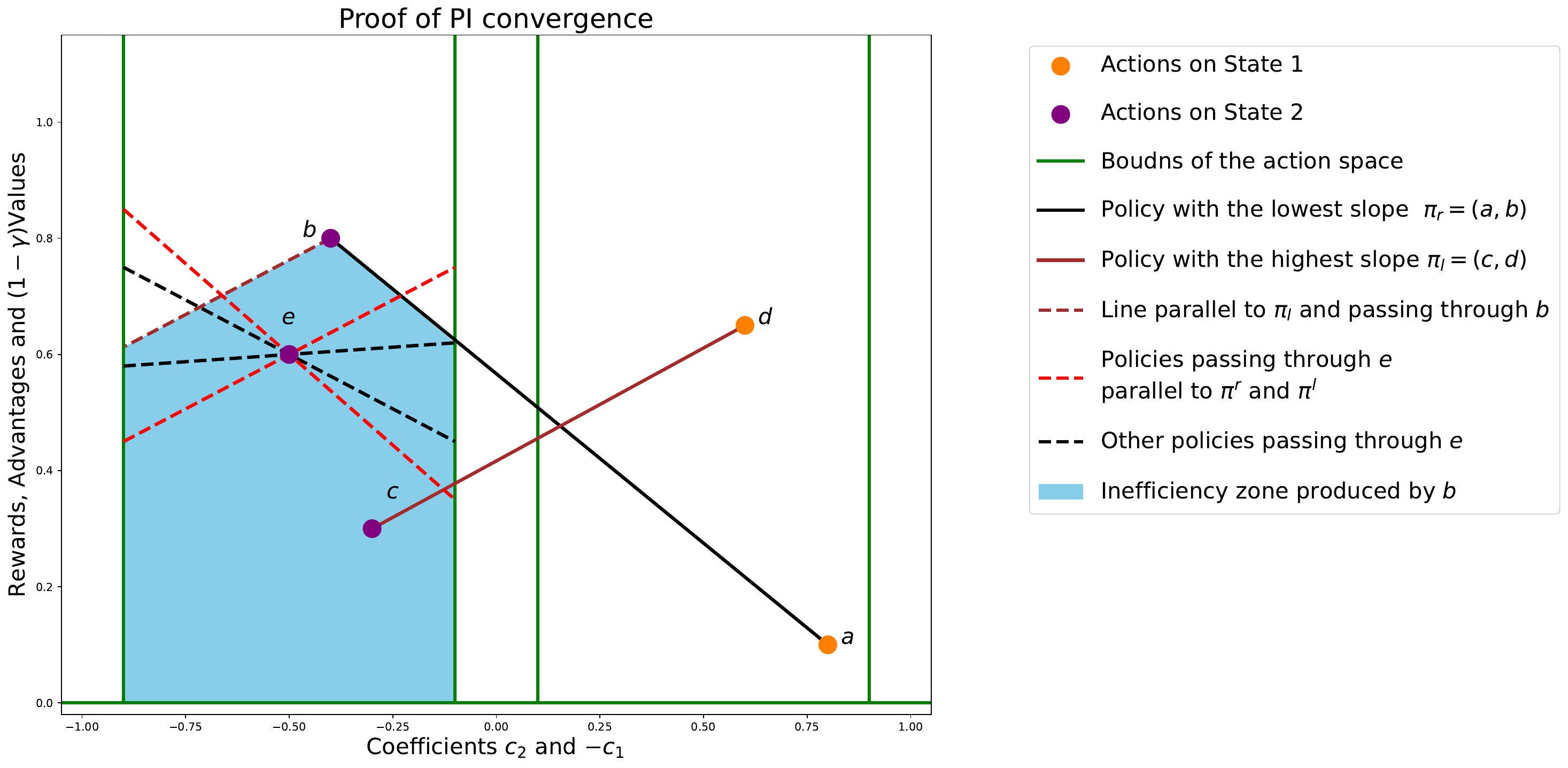}
\end{center}
\caption{Proof of Theorem \ref{thm:PI_2state}. For any set of actions $\mathcal{A}$ and the corresponding set of policies $\mathcal{U}$ formed by them, we identify the policies in $\mathcal{U}$ with the most extreme slopes. Denote the policy with the smallest slope as $\pi_r$ (formed by actions $a$ and $b$) and the policy with the largest slope as $\pi_l$ (formed by actions $c$ and $d$). If we draw two lines parallel to $\pi_l$ and $\pi_r$ through any action (for example, action $b$ as shown in the Figure), the area below both lines forms an \textbf{inefficiency zone}: any action $e$ within this zone is inefficient within $\mathcal{U}$ because action $b$ lies above any policy that passes through $e$.
Next, we choose a state where the vertical difference between $\pi_r$ and $\pi$ increases with the corresponding coefficient (State 2 in the Figure). The action that participates in the policy with the lower value at this state (action $c$) falls inside the inefficiency zone of the action that forms the policy with the higher value (action $b$).
}
\label{fig:pi_proof}
\end{figure*}

In this section we give a geometric proof of the fact that the number of iterations required by Policy Iteration algorithm to converge in 2-state MDP is upper bounded by the number of actions in it. The key idea of our analysis is to consider all possible trajectories of PI together.

 Denote the policy produced after $t$ steps of PI as $\pi_t$. Then, geometrically, PI dynamics is as follows: during the Policy Improvement step PI chooses an action $a^s_{t+1}$ which has the highest vertical distance to the hyperplane $\mathcal{H}^{\pi_t}$ at every state $s$. These actions are then collected in a policy $\pi_{t+1} = (a^1_{t+1}, \dots, a^n_{t+1})$. During the Policy Evaluation step PI constructs a hyperplane $\mathcal{H}^{\pi_{t+1}}$ which passes through them. To describe this relations, we say that actions $a^1_{t+1}, \dots, a^n_{t+1}$ are \textbf{produced} by the policy $\pi_{t}$, while they \textbf{form} the policy $\pi_{t+1}$. Both notions are expandable on sets of policies and actions. A set of policies $\mathcal{U}$ is formed by a set of actions $\mathcal{A}$ if $\mathcal{U}$ consists of all policies formed by actions in $\mathcal{A}$.
 Similarly, a set of actions $\mathcal{A}'$ is produced by $\mathcal{U}'$ if $\mathcal{A}'$ consists of all actions produced by policies in $\mathcal{U}'$. Note, that $\mathcal{A}'$ will always have at least one action at any state. We call actions in $\mathcal{A}'$ \textbf{efficient} on $\mathcal{U}'$.

This notation allows us to describe the global dynamics of PI in terms of these sets. We start with $\mathcal{A}_0 = \mathcal{A}$, all actions available in the MDP. Then, $\mathcal{A}_0$ forms the set of policies $\mathcal{U}_0$, which produces the set of actions $\mathcal{A}_1 \subset \mathcal{A}_0$ and so on. The following theorem establishes a key property of the PI dynamics.

\begin{theorem} \label{thm:PI_2state}
In a two-state MDP for any set of actions $\mathcal{A}$, $\left| \mathcal{A}\right| \ge 3$, with actions on both states, there is at least one action which is not efficient on the set of policies $\mathcal{U}$ formed by actions in $\mathcal{A}$. \end{theorem}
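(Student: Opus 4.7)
The plan is to work in the two-dimensional geometric picture from \cite{mdp_geometry}: each action $a$ is a point $(c_1^a, r^a)$ in the plane (the coordinate $c_2^a = \gamma - 1 - c_1^a$ is redundant), state 1 actions lie in $c_1 \le \gamma-1 < 0$ while state 2 actions lie in $c_1 \ge 0$, each policy $\pi = \{a,b\}$ is represented by the line through its two constituent actions, and the advantage of any action with respect to $\pi$ equals the signed vertical distance from the action's point to that line.

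The first step is to prove the inefficiency-zone lemma sketched in Figure~\ref{fig:pi_proof}. Let $m_r \le m_l$ be the minimum and maximum slopes attained by policies in $\mathcal{U}$, realized by $\pi_r, \pi_l$, and for any $b \in \mathcal{A}$ call the set of points lying strictly below both the line through $b$ of slope $m_r$ and the line through $b$ of slope $m_l$ the inefficiency zone of $b$. I will show that if $e \in \mathcal{A}$ with $\st(e) = \st(b)$ lies in that zone, then $e$ is not efficient. The proof is a linearity observation: for any $\pi \in \mathcal{U}$ the advantage gap $\adv{b}{\pi} - \adv{e}{\pi} = (r^b - r^e) + m(c_1^e - c_1^b)$ is a linear function of the slope $m$ of $\pi$, and the two inefficiency-zone conditions express precisely that this gap is strictly positive at the endpoints $m = m_r$ and $m = m_l$; hence it is strictly positive throughout $[m_r, m_l]$, so $b$ dominates $e$ against every policy in $\mathcal{U}$ and $e$ is never produced.

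The second step is to exhibit such a pair. Assume $m_r < m_l$; the lines $\pi_r, \pi_l$ then intersect at a unique point with $c_1$-coordinate $x^*$, and the function $\pi_l - \pi_r$ is strictly increasing in $c_1$ with its unique zero at $x^*$. Writing $\pi_r = \{a_r, b_r\}$ and $\pi_l = \{a_l, b_l\}$, I will case-split on $\operatorname{sgn}(x^*)$. Because the coordinate ranges of state 1 and state 2 actions are disjoint (separated by $[\gamma - 1, 0]$), one of them lies entirely on a single side of $x^*$: if $x^* \ge 0$ then $c_1^{a_r}, c_1^{a_l} \le \gamma - 1 < 0 \le x^*$, and both inefficiency conditions for $a_l$ with respect to $a_r$ reduce to $c_1^{a_l} < x^*$ and $c_1^{a_r} < x^*$ and are immediate, placing $a_l$ in the inefficiency zone of $a_r$; if $x^* < 0$ then symmetrically $b_r$ ends up in the inefficiency zone of $b_l$. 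The overlap cases $a_r = a_l$ and $b_r = b_l$ force $x^*$ to equal the shared coordinate, so they drop automatically into the correct branch of the split.

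The main obstacle I anticipate is the degenerate case $m_r = m_l$, in which a short calculation forces all of $\mathcal{A}$ to be collinear; then every policy in $\mathcal{U}$ has an identical value vector, every state's argmax is tied, and extracting a non-produced action requires either an implicit genericity assumption or a careful tie-breaking argument. A secondary point worth emphasizing is the disjointness of the state 1 and state 2 coordinate ranges, since this is what makes the $\operatorname{sgn}(x^*)$ split uniform and avoids an otherwise delicate subcase analysis based on where $x^*$ lies relative to individual state 2 (or state 1) actions.
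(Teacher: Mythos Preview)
Your proposal is correct and follows essentially the same approach as the paper: select the two extreme-slope policies $\pi_r,\pi_l$ in $\mathcal{U}$, invoke the inefficiency-zone idea (the paper uses exactly that term in Figure~\ref{fig:pi_proof}), and show that on the appropriate state the extreme-policy action with the lower line lies in the inefficiency zone of the other. The only execution differences are cosmetic: where the paper's algebraic proof routes the comparison $\adv{c}{\pi}<\adv{b}{\pi}$ through two auxiliary self-loop actions $e_l,e_r$, you get the same strict inequality directly from the affine dependence of the advantage gap on the slope $m$; and where the paper selects the working state by comparing $V^{\pi_r}$ and $V^{\pi_l}$, you do the equivalent via $\operatorname{sgn}(x^*)$. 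You also explicitly flag the collinear degenerate case $m_r=m_l$, which the paper leaves implicit.
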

\begin{proof}
A geometric proof if presented on Figure \ref{fig:pi_proof} and an algebraic proof is presented in Appendix \ref{app:PI_2}. \end{proof}
\begin{corollary}
In a two-state MDP the number of iterations required by the Policy Iteration algorithm to converge is bounded by the number of actions in it.\end{corollary}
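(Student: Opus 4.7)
The plan is to iterate Theorem~\ref{thm:PI_2state} along the monotone chain of action sets $\mathcal{A}_0 \supset \mathcal{A}_1 \supset \cdots$ introduced just before the theorem, using it to upper bound the number of distinct policies any PI run can visit.

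I would first establish the invariant that the PI iterate $\pi_t$ always lies in $\mathcal{U}_t$. The base case $\pi_0 \in \mathcal{U}_0$ is immediate because $\mathcal{U}_0$ is formed by $\mathcal{A}_0 = \mathcal{A}$, the set of all actions. For the inductive step, suppose $\pi_t \in \mathcal{U}_t$; since $\mathcal{A}_{t+1}$ is by definition the union of actions produced by every policy in $\mathcal{U}_t$, the two actions chosen by PI's Policy Improvement step at iteration $t+1$ (namely the actions produced by $\pi_t$) belong to $\mathcal{A}_{t+1}$, so $\pi_{t+1}$ is formed from $\mathcal{A}_{t+1}$ and therefore lies in $\mathcal{U}_{t+1}$.

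With this invariant in hand, I would apply Theorem~\ref{thm:PI_2state} to each $\mathcal{A}_t$. Each such set contains at least one action per state (being produced by policies, each of which selects one action per state), so the hypothesis of the theorem is met whenever $|\mathcal{A}_t| \ge 3$, yielding $|\mathcal{A}_{t+1}| \le |\mathcal{A}_t| - 1$. Once $|\mathcal{A}_t| = 2$, the set $\mathcal{U}_t$ collapses to a single policy, and since $\pi_t \in \mathcal{U}_t$ this forces $\pi_{t+1} = \pi_t$, triggering PI's stopping criterion. Since $|\mathcal{A}_0| = |\mathcal{A}|$, the chain strictly shrinks for at most $|\mathcal{A}| - 2$ iterations before convergence, bounding the total PI iteration count by $|\mathcal{A}|$.

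The main point that requires care is the first invariant: PI's Policy Improvement step is written as a global $\argmax$ over all actions in the MDP, yet we need the next policy to stay inside the restricted set $\mathcal{U}_{t+1}$. This is handled purely by the definitions, since $\mathcal{A}_{t+1}$ already collects every action that can arise as an improvement from \emph{any} policy in $\mathcal{U}_t$, so the argmax applied to the specific $\pi_t \in \mathcal{U}_t$ never escapes this set. Everything after the bookkeeping is a one-line counting argument on the sizes $|\mathcal{A}_t|$.
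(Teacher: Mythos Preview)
Your proposal is correct and follows essentially the same argument as the paper: both apply Theorem~\ref{thm:PI_2state} to the chain $\mathcal{A}_0 \supset \mathcal{A}_1 \supset \cdots$, observe that the inefficient action guaranteed by the theorem cannot appear in $\mathcal{A}_{t+1}$, and conclude that the sets shrink by at least one until $|\mathcal{A}_T| = n = 2$. The paper's proof is terser; your version makes explicit the invariant $\pi_t \in \mathcal{U}_t$ and the stopping argument at $|\mathcal{A}_t|=2$, which the paper leaves implicit.
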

\begin{proof}
Theorem \ref{thm:PI_2state} states that for any $\mathcal{A}_t$ there is an action $a_t$ which is not efficient on $\mathcal{U}_t$. It implies that $a_t \notin \mathcal{A}_{t+1}$, which, in turn implies that $|\mathcal{A}_{t+1}| \le |\mathcal{A}_{t}| - 1$. Therefore, after at most $T \le m-n$ iterations $|\mathcal{A}_T| = n$, which implies that PI is guaranteed to output the optimal policy after $T$ iterations.
\end{proof}

\section{Analysis of Value Iteration} \label{sec:VI}
\subsection{Our Approach} \label{ssec:vi_approach}

\begin{figure*}[ht] 
\begin{center}
\includegraphics[width=\textwidth]{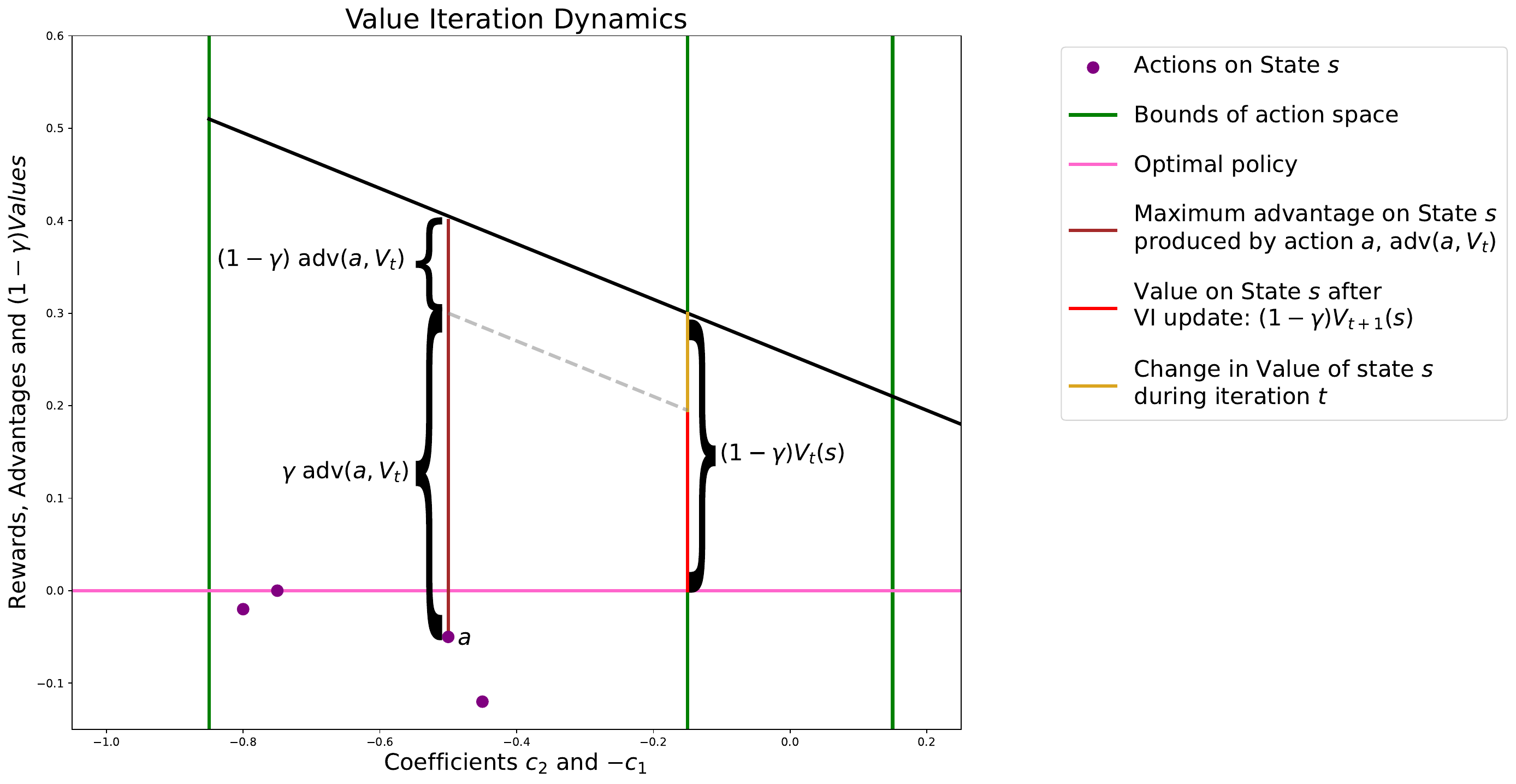}
\end{center}
\caption{Illustration of the Value Iteration algorithm dynamics: $V_{t+1}(s) = V_t(s) + \adv{a^*}{V_{t}}$ (figure adapted from \citet{mdp_geometry}). Graphically, VI can be interpreted as subtracting the length of the brown bar, scaled by $1 - \gamma$, from the value bar. The subtracted length is represented by the yellow bar, while the remaining value is shown as a red bar.
Assume that $s$ is the state with the maximum value $V_t(s)$, as depicted in the figure. For $V(s)$ to contract exactly by $\gamma$ (i.e., $V_{t+1}(s) = \gamma V_t(s)$), the optimal action must be chosen as the maximizer and must lie exactly on the self-loop line (or its projection in the multidimensional case). For the state with the minimum value, $s'$, the subtracted values will always be less than $(1 - \gamma) V_t(s')$, unless both conditions are met.
Together, these two facts explain the source of the extra convergence in the Value Iteration update: it skews the pseudo-policy $V_t$ toward horizontal hyperplane at a faster rate than it converges to zero.}
\label{fig:value_iteration_dynamics}
\end{figure*}

In this section, we analyze the convergence of the Value Iteration algorithm (Algorithm \ref{alg:value_iter}). We show that when VI is viewed through the dynamics of the pseudo-policy hyperplane produced by the values $V_t$ at each iteration, it does not only converge this hyperplane toward the optimal policy (which is $0$ in the case of a normalized MDP) but also skews it toward the horizontal slope. Under certain assumptions, the rate of this skewing is higher than the rate of convergence.

The key idea of the analysis is illustrated in Figure \ref{fig:value_iteration_dynamics}. It shows that in a 2-state MDP case, the Value Iteration algorithm subtracts more than $(1-\gamma)V_t(s)$ from the value of the state $s$ with the maximum value and less than $(1-\gamma)V_t(s')$ from the value of the state $s'$ with the minimum value. To observe the same effect in the multidimensional case, we require an assumption on the connectivity of the optimal policy, which is stated below. Additionally, we assume the uniqueness of the optimal policy.
\begin{assumption} \label{ass:irred_and_aper_MDP}
The MRP implied by the unique optimal policy $\pi^*$ is irreducible and aperiodic.
\end{assumption}

We are now going to show the faster-than-$\gamma$ convergence rate of the standard Value Iteration algorithm, which comes from two sources. One of them is the fact that optimal actions lie inside the dedicated area, which results in the mixing properties of the matrix $P^*$. The other one is negative advantages the non-optimal actions. To incorporate both sources into our analysis we characterize the dynamics of the Value Iteration update by deriving the following upper and lower bounds. For a single state $s$ with the action $a$ maximizing advantage on that state at time step $t$ and the optimal action $a^*$, the value on that state after one iteration of the algorithm can be upper bounded by:
\begin{align} \label{eq:vi_ub}
V_{t+1}(s) = V_t(s) + \adv{a}{\pi}=r^a + \gamma \sum_{i=1}^n p^a_i V_t(i) \le \gamma \sum_{i=1}^n p^a_i V_t(i),
\end{align}
where the last inequality follows from the fact that in the normal form, the reward of each action is non-positive. The same expression might be lower bounded by:
\begin{align} \label{eq:vi_lb}
V_{t+1}(s) \ge V_t(s) + \adv{a^*}{\pi}= \gamma \sum_{i=1}^n p^{a^*}_i V_t(i),
\end{align}
since action $a$ maximizes the advantage with respect to pseudo-policy implied by $V_t$ and the reward of optimal actions is $0$. Combining these two inequalities together and writing them in a matrix form, we have an inequality:
\begin{equation*}
\gamma P^* V_t \le V_{t+1} \le \gamma P_t V_t,
\end{equation*}

where $P_t$ denotes the probabilities matrix of actions chosen at step $t$, and $P^*$ is the matrix of optimal aciton probabilities. Therefore we can mix $\gamma P^* V_t$ and $\gamma P_t V_t$ with some coefficients between $0$ and $1$ to get $V_{t+1}$. Placing these coefficients in a diagonal matrix $D_t$, we obtain the following characterization of the Value Iteration dynamics:
\begin{equation} \label{eq:VI_dynamics}
V_{t+1} = \gamma P_t' V_t := \gamma [D_t P^* + (I - D_t) P_t] V_t.
\end{equation}

The value of $D_t(s,s)$ is ambiguous if $a = a^*$, in which case we set $D_t(s,s)=1$. We can place a restriction on the other values of the diagonal entries of $D_t$, \textit{i.e.} when $a \neq a^*$. Introduce the maximum advantage of non-optimal actions with respect to the optimal policy as $-\delta$:
\begin{align*}
\adv{a'}{\pi} \le -\delta < 0,  \forall a' \notin \pi^*
\end{align*}

In particular, $r^a = \adv{a}{\pi^*} \le \delta$, which implies that the value of $D_t(s,s)$ cannot be zero. We can derive the lower bound on it as follows.

\begin{gather}
\gamma \left(D_t(s,s)\sum_{i=1}^n p^{a^*}_i V_t(i) + \left(1-D_t(s,s)\right)\sum_{i=1}^n p^a_i V_t(i)\right)  = V_{t+1} (s) \le \gamma \sum_{i=1}^n p^a_i V_t(i) - \delta \\
\gamma D_t(s,s) \sum_{i=1}^n (p^{a^*}_i V_t(i) - p^a_i V_t(i)) \le -\delta \label{eq:spandelta}
\end{gather}

Consider the sum $\sum_{i=1}^n (p^{a^*}_i V_t(i) - p^a_i V_t(i))$. Inequality \ref{eq:spandelta} implies that the sum is negative. Finally, each weighted sum of $V_t$ coordinates can be bounded as follows:

\begin{gather}
- \mathrm{sp}(V_t) = \min_i V_t(i) - \max_i V_t(i) \le \sum_{i=1}^n (p^{a^*}_i V_t(i) - p^a_i V_t(i)) \\
\forall t, s:D_t(s,s) \ge \frac{\delta}{\gamma \sum_{i=1}^n (p^a_i V_t(i) - p^{a^*}_i V_t(i))} \ge \frac{\delta}{\gamma\,\spanv{V_t}}
\end{gather}

\subsection{Convergence Analysis} \label{ssec:VI_convergence}

Assumption \ref{ass:irred_and_aper_MDP} is equivalent to the fact that there exists an exponent $N$ such that all entries of the matrix $\left(P^{*}\right)^{N}$ are positive. It is known that we can choose $N \le n^2 - 2n + 2$ (see, for example \cite{holladay1958powers}). Denote the minimum of the entries of $\left(P^{*}\right)^{N}$ by $\omega$. The main theorem of this section characterizes the convergence of a standard VI algorithm:

\begin{theorem} \label{thm:sync_no_lr}
If Assumption \ref{ass:irred_and_aper_MDP} holds, the span of the value vector obtained after $N$ steps of a standard Value Iteration algorithm satisfies the following inequality:

\begin{equation*}
 \spanv{V_N} \le \gamma^N \tau\, \spanv{V_0},   
\end{equation*} 

where $\tau \in (0,1)$.
\end{theorem}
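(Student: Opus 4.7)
The plan is to iterate the one-step representation (\ref{eq:VI_dynamics}) for $N$ steps and combine the entrywise positivity of $(P^*)^N$ with a scrambling-type span contraction. Writing
$$V_N = \gamma^N M\, V_0, \qquad M := P'_{N-1} P'_{N-2} \cdots P'_0,$$
where each $P'_t = D_t P^* + (I - D_t) P_t$ is row-stochastic, the crucial observation is the entrywise bound $P'_t \ge D_t(s,s)\,P^*$ on the $s$-th row, since that row is a convex combination weighting $P^*(s,\cdot)$ by $D_t(s,s)$. Setting $d := \min_{0 \le t < N,\,s} D_t(s,s)$, monotonicity of matrix multiplication on nonnegative matrices gives $M \ge d^N (P^*)^N \ge d^N \omega J$, where $J$ is the $n \times n$ all-ones matrix and $\omega>0$ is the minimum entry of $(P^*)^N$ guaranteed by Assumption~\ref{ass:irred_and_aper_MDP}.

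Next, I would convert this uniform lower bound on the entries of $M$ into a span contraction. Writing $M = \alpha J + M'$ with $\alpha := d^N \omega$, the remainder $M'$ has nonnegative entries and row sums $1 - n\alpha$, so $M' = (1 - n\alpha) Q$ for some stochastic matrix $Q$. Because $J V_0$ is a scalar multiple of $\mathbf{1}$ and therefore contributes nothing to the span, and because stochastic matrices are span contractions, $\spanv{M V_0} \le (1 - n\alpha)\spanv{Q V_0} \le (1 - n \alpha) \spanv{V_0}$. Multiplying by $\gamma^N$ yields the theorem with $\tau := 1 - n d^N \omega$.

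The main obstacle I expect is certifying $d > 0$ uniformly in $t$, since $D_t$ is defined implicitly by the iterate $V_t$ itself. For this I would invoke the lower bound $D_t(s,s) \ge \delta/(\gamma\,\spanv{V_t})$ established just before (\ref{eq:VI_dynamics}), together with the standard span contraction of VI, which in the normalized MDP ($V^{\pi^*}=0$) gives $\spanv{V_t} \le \gamma^t \spanv{V_0} \le \spanv{V_0}$ for $t = 0,\dots,N-1$. Consequently $d \ge \delta/(\gamma\,\spanv{V_0}) > 0$, so $\tau < 1$ strictly. Positivity of $\tau$ follows from $n\omega \le 1$ (each row of $(P^*)^N$ sums to one); in the degenerate equality case $\spanv{MV_0}=0$, so any $\tau \in (0,1)$ works in the conclusion, completing the argument.
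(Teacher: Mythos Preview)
Your proposal is correct and follows essentially the same route as the paper: represent $V_N=\gamma^N\big(\prod_t P'_t\big)V_0$, use $P'_t\ge D_tP^*$ together with the lower bound $D_t(s,s)\ge \delta/(\gamma\,\spanv{V_t})$ and positivity of $(P^*)^N$ to get a uniform entrywise lower bound $\phi\,\mathbf{1}_{n\times n}$ on the product, then split off $\phi\,\mathbf{1}_{n\times n}$ and apply span non-expansiveness of the residual stochastic matrix. The only cosmetic difference is that you replace each factor $\delta/(\gamma\,\spanv{V_t})$ by the global minimum $d$, yielding a slightly larger $\tau$ than the paper's $1-n\omega\delta^N/(\gamma^N\prod_t\spanv{V_t})$; your treatment of the degenerate case $n d^N\omega=1$ is a nice addition the paper omits.
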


\begin{proof}

We can aggregate Equation~\ref{eq:VI_dynamics} from $t=N-1$ to $t=0$:
\begin{equation*}
V_N = \gamma^N \left( \prod_{t=N}^{1} P'_t \right) V_0
\end{equation*}

Spell out the definition of $P'_t$:
\begin{multline}
\prod_{t=N}^{1} P'_t = \prod_{t=N}^{1} \left( D_t P^* + (I - D_t) P_t \right) \ge \prod_{t=N}^1 \left( D_t P^* \right) \ge \\ \ge \prod_{t=N}^{1} \left( \frac{\delta}{\gamma\,\spanv{V_t}} P^* \right) = \frac{\delta^N}{\gamma^N\,\prod_{t=1}^N \spanv{V_t}} \left(P^*\right)^N \ge \frac{\omega \, \delta^N}{\gamma^N\,\prod_{t=1}^N \spanv{V_t}}\mathbf{1}_{n\times n}.
\end{multline}

Denote the constant $\omega \, \delta^N\! \left/ \left(\gamma^N\,\prod_{t=1}^N \spanv{V_t}\right)\right.$ by $\phi$. Note that the LHS of the inequality is a stochastic matrix, and the RHS is a matrix with all row sums equal to $n\phi$. Therefore, the normalized difference
\begin{equation*}
Q = \frac{\left(\prod_{t=N}^{1} P'_t \right) - \phi \,\mathbf{1}_{n\times n}}{1-n\phi} \ge 0
\end{equation*}
is a stochastic matrix.
Use $Q$ to express $V_N$:
\begin{equation*}
V_N = \gamma^N \left( \prod_{t=N}^{1} P'_t \right) V_0 = \gamma^N \left((1-n\phi)Q + \phi\mathbf{1}_{n\times n}\right)V_0 = \gamma^N(1-n\phi)QV_0 +n\phi\overline{V}_0,
\end{equation*}
where $\overline{V}_0$ denotes the mean of the vector $V_0$. This implies the bounds on the coordinates of $V_N$:
\begin{gather*}
\gamma^N(1-n\phi)\min(V_0) + n\phi\overline{V}_0 \le V_n \le \gamma^N(1-n\phi)\max(V_0) + n\phi\overline{V}_0 \\
\spanv{V_N} \le \gamma^N(1-n\phi)\spanv{V_0}.
\end{gather*}
Finally, we can simplify the factor $1-n\phi$:
\begin{equation*}
1-n\phi = 1-\frac{n\,\omega \, \delta^N}{\gamma^N\,\prod_{t=1}^N \spanv{V_t}} = \tau.
\end{equation*}
\end{proof}

\begin{lemma} \label{lem:adv_span}
    Given two actions $a_1, a_2$ and a policy $\pi$, we have
    \begin{equation*}
        \left|(\adv{a_1}{\pi} - \adv{a_2}{\pi}) - (r^{a_1} - r^{a_2})\right| \le \gamma\, \spanv{V^\pi}
    \end{equation*}
    if the actions are on the same state, and
    \begin{equation*}
        \left|(\adv{a_1}{\pi} - \adv{a_2}{\pi}) - (r^{a_1} - r^{a_2})\right| \le (1+\gamma) \spanv{V^\pi}
    \end{equation*}
    if they are not.
\end{lemma}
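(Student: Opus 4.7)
The plan is to work directly from the geometric definition $\adv{a}{\pi} = a^+ \cdot V^\pi_+$, which, after writing out the coordinates of the action vector, yields the explicit formula
\[
    \adv{a}{\pi} = r^a + \gamma \sum_{i=1}^n p^a_i V^\pi(i) - V^\pi(\st(a)).
\]
Subtracting the analogous expression for $a_2$ from the one for $a_1$ and isolating $r^{a_1} - r^{a_2}$ leaves
\[
    (\adv{a_1}{\pi} - \adv{a_2}{\pi}) - (r^{a_1} - r^{a_2}) = \gamma \sum_{i=1}^n (p^{a_1}_i - p^{a_2}_i)\, V^\pi(i) - V^\pi(\st(a_1)) + V^\pi(\st(a_2)).
\]

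Next I would bound the $\gamma$-weighted sum. The key observation is that $u := p^{a_1} - p^{a_2}$ satisfies $\sum_i u_i = 0$, so adding any constant to $V^\pi$ does not change $\sum_i u_i V^\pi(i)$. Splitting $u = u_+ - u_-$ into positive and negative parts gives $\|u_+\|_1 = \|u_-\|_1 =: \alpha \le 1$, and then
\[
    \sum_i u_i V^\pi(i) = \sum_i u_+(i) V^\pi(i) - \sum_i u_-(i) V^\pi(i) \le \alpha\, \max V^\pi - \alpha\, \min V^\pi = \alpha\, \spanv{V^\pi},
\]
with the analogous lower bound. Hence $\left|\gamma \sum_i (p^{a_1}_i - p^{a_2}_i) V^\pi(i)\right| \le \gamma\, \spanv{V^\pi}$.

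For the two cases, it remains to handle the trailing $-V^\pi(\st(a_1)) + V^\pi(\st(a_2))$. When $\st(a_1) = \st(a_2)$ this term vanishes and we obtain the bound $\gamma\,\spanv{V^\pi}$. Otherwise it is at most $\spanv{V^\pi}$ in absolute value, so the triangle inequality gives the $(1+\gamma)\,\spanv{V^\pi}$ bound. There is no real obstacle here; the only subtlety worth emphasizing is the span-versus-max bound in the middle step, which relies essentially on $u$ having zero sum and is what makes the span (rather than the possibly much larger $\|V^\pi\|_\infty$) appear on the right-hand side.
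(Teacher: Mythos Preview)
Your argument is correct and follows the same idea as the paper: both write the quantity as a zero-sum weighted combination $\sum_i C_i V^\pi(i)$ and bound it by $\spanv{V^\pi}$ times the appropriate constant. The only cosmetic difference is that the paper keeps the $-1$ at each action's state inside the $c$-coefficients and bounds $\max_i |c^{a_1}_i - c^{a_2}_i|$ directly (by $\gamma$ or $1+\gamma$), whereas you peel off $-V^\pi(\st(a_1))+V^\pi(\st(a_2))$ and treat it separately; your positive/negative-part split with $\|u_+\|_1\le 1$ is a crisper justification than the paper's informal ``redistribution'' step.
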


\begin{proof}
    In both cases, the quantity we want to bound can be written as
    \begin{equation*}
        (\adv{a_1}{\pi} - \adv{a_2}{\pi}) - (r^{a_1} - r^{a_2}) = \sum_{i=1}^n (c^{a_1}_i - c^{a_2}_i) V^{\pi}(i).
    \end{equation*}
    Note that the sum $\sum_{i=1}^n (c^{a_1}_i - c^{a_2}_i)$ of the coefficient differences  is equal to $0$. For the coefficients $-C \le C_i \le C, \, \forall i$ with such a property, an inequality 
    \begin{equation*}
    -C\spanv{V^\pi} \le \sum_{i=1}^n C_i V^\pi (i) \le C\spanv{V^\pi}
    \end{equation*}
    can be proved with a simple redistribution argument. For $\Delta C >0$ and indices $k$ and $l$ such that $V^\pi(k) \le V^\pi(l)$, the overall sum can be increased if $C_k$ is decreased by $\Delta C$ and $C_l$ is increased by $\Delta C$. Therefore the maximum of the sum is reached when coefficient $C$ is assigned to the $\max V^\pi$ and $-C$ assigned to the $\min V^\pi$, which implies $\max \sum_{i} C_i V^\pi(i)=C \spanv{V^\pi}$.

    The lemma then follows from the fact that the maximum absolute difference between coefficients of two actions is $\gamma$ when they are on the same state and $1+\gamma$ when they are on the different states.
\end{proof}

The following corollary uses Theorem \ref{thm:sync_no_lr} and Lemma \ref{lem:adv_span} to demonstrate the convergence of the VI algorithm.




\begin{corollary} \label{cor:disc_reward_complexity}
    A standard VI algorithm with span-based stopping criteria $H(\mathcal{I}_t): \spanv{V_t - V_{t-1}} \le \frac{\epsilon(1-\gamma)}{\gamma}$ outputs $\epsilon$-optimal policy after at most: 
\begin{equation} \label{eq:num_iter}
    \mathcal{O} \left( \frac{\log{(1/\epsilon)} + \log{(1-\gamma)}}{\log{(1/\gamma)} + \frac{\log{(1/\tau)}}{N}} \right)
\end{equation}
iterations.

\end{corollary}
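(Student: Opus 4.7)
The plan is to bound the first $t$ at which the stopping criterion $\spanv{V_t - V_{t-1}} \le \epsilon(1-\gamma)/\gamma$ is triggered, and then invoke the standard span-based $\epsilon$-optimality result, which says that once this inequality holds, the greedy policy with respect to $V_t$ is $\epsilon$-optimal. So the real work is the iteration bound.

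For that, I would iterate Theorem~\ref{thm:sync_no_lr} across consecutive windows of $N$ iterations to get $\spanv{V_{kN}} \le (\gamma^N \tau)^k \spanv{V_0}$, and fill in the non-multiples of $N$ via the classical $\gamma$-contraction of the Bellman operator in the span seminorm, which follows from the bracketing $\gamma P_{t-1}(V_t - V_{t-1}) \le T V_t - T V_{t-1} \le \gamma P_t (V_t - V_{t-1})$ together with the span-nonexpansiveness of stochastic matrices. This yields $\spanv{V_t} \le C\,(\gamma^N \tau)^{t/N}$ for a constant $C$ depending on $\spanv{V_0}$. The stopping-criterion quantity is then controlled by the triangle inequality for the span seminorm:
\begin{equation*}
\spanv{V_t - V_{t-1}} \;\le\; \spanv{V_t} + \spanv{V_{t-1}} \;\le\; (1+\gamma)\spanv{V_{t-1}} \;\le\; C'\,(\gamma^N \tau)^{t/N}.
\end{equation*}
Imposing $C'(\gamma^N \tau)^{t/N} \le \epsilon(1-\gamma)/\gamma$ and taking logarithms gives
\begin{equation*}
t \;=\; \mathcal{O}\!\left(\frac{\log(1/\epsilon) + \log(1/(1-\gamma))}{\log(1/\gamma) + \log(1/\tau)/N}\right),
\end{equation*}
matching the claim.

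The delicate point I expect to be the main obstacle is ensuring that $\tau$ from Theorem~\ref{thm:sync_no_lr} can be taken uniformly across successive windows: the expression $\tau = 1 - n\phi$ in the proof depends on $\prod_{t=1}^N \spanv{V_t}$, which varies from window to window and in fact degenerates as the iterates shrink. The cleanest resolution is to observe that once $\spanv{V_t}$ is sufficiently small, Lemma~\ref{lem:adv_span} combined with the minimum suboptimality gap $\delta$ forces the argmax in Value Iteration to always lie in $\pi^*$; from that iteration onward, Value Iteration coincides with policy evaluation for $\pi^*$, and the MDP-level contraction $\tau = 1 - n\omega$ applies uniformly. The short burn-in before this regime contributes only an additive lower-order term that is absorbed into the $\mathcal{O}$ notation.
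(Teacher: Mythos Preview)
Your argument is correct and follows the same overall arc as the paper's: iterate Theorem~\ref{thm:sync_no_lr} over windows of length $N$ to obtain $\spanv{V_t} \le \gamma^t \tau^{\lfloor t/N \rfloor}\spanv{V_0}$, then tie both the stopping quantity $\spanv{V_t - V_{t-1}}$ and the $\epsilon$-optimality of the greedy policy to $\spanv{V_t}$. The differences are in how those ties are made. The paper works inside its geometric framework: it applies Lemma~\ref{lem:adv_span} on the normalized MDP to show that every action selected by $\pi_t$ has $|r^a| \le \gamma\,\spanv{V_t}$, which directly yields $\epsilon'$-optimality with $\epsilon' = \gamma\,\spanv{V_t}/(1-\gamma)$; it then constructs an auxiliary hyperplane parallel to $\mathcal{H}_t$ and applies Lemma~\ref{lem:adv_span} again to bound $\spanv{V_{t+1}-V_t}$ by $(1+\gamma)\spanv{V_t}$. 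You instead invoke the classical span-based $\epsilon$-optimality criterion as a black box and control $\spanv{V_t - V_{t-1}}$ via the seminorm triangle inequality together with $\spanv{V_t} \le \gamma\,\spanv{V_{t-1}}$ (valid because $V_t = \gamma P'_{t-1} V_{t-1}$ in the normalized setting). Both routes land on the same inequality $\spanv{V_{t}-V_{t-1}} \le (1+\gamma)\spanv{V_{t-1}}$; the paper's is more self-contained within its own apparatus, yours is shorter and more standard.

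You also explicitly confront the point that $\tau = 1 - n\phi$ in Theorem~\ref{thm:sync_no_lr} depends on $\prod_{t=1}^N \spanv{V_t}$ and is therefore not a priori uniform across successive windows, and you resolve it with a two-phase argument (a burn-in until Lemma~\ref{lem:adv_span} forces the greedy action to coincide with $\pi^*$, then the uniform rate $1 - n\omega$ thereafter). The paper's proof simply writes $\spanv{V_t} \le \gamma^t \tau^{\lfloor t/N \rfloor}\spanv{V_0}$ with a single symbol $\tau$ and does not discuss this issue, so your treatment is in fact more careful on this point.
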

\begin{proof}
First, we establish a connection between the $\spanv{V_t}$ and how close the policy $\pi_t$ is to the optimal one. For a state $s$ consider the action $a$ which maximizes the advantage with respect to $V_t$ and the action $a^*$ is the one participating in the optimal policy $\pi^*$.
Applying Lemma \ref{lem:adv_span} to actions $a$, $a^*$ and the policy $\pi^t$, we have that:

\begin{align*}
&\spanv{V^\pi} \ge \left|(\adv{a}{\pi_t} - \adv{a^*}{\pi_t}) - (r^{a} - r^{a^*})\right|=\\
&\left| -r^{a} + (\adv{a}{\pi_t} - \adv{a^*}{\pi_t}) \right| \ge |-r^a| 
\end{align*}

Therefore, the minimum possible reward of the actions in $\pi_t$ is $-\gamma \spanv{V_t}$ and the policy is  $\epsilon'$-optimal for $\epsilon' = \frac{\gamma\, \spanv{V_t}}{(1-\gamma)}$.

Second, we establish a connection between the span of $V_t$ and the stopping criterion, which is defined in terms of the span of advantages: $\spanv{V_{t+1}-V_t}=\max_{a \in \pi_t} \adv{a}{\pi_t} - \min_{a \in \pi_t} \adv{a}{\pi_t}$. Note that this span does not change when we move the pseudo-policy hyperplane $\mathcal{H}_t$ vertically. Construct an auxiliary hyperplane $\mathcal{H}_t'$ with values $V_t'$ and a corresponding pseudo-policy $\pi_t'$ parallel to $\mathcal{H}_t$. We want to choose its height such that if we consider the intersection of the hyperplane and the set of points which satisfy the coordinate constraints imposed on actions, the maximum height among the points in this intersection is 0. In other words, we want that all the maximum reward of all potential actions which lie on $\mathcal{H}_t'$ is $0$ or that $\mathcal{H}_t'$ crosses the space of possible optimal actions on the border of this space. Then, let's choose a point with the $0$ height on this hyperplane and construct an auxiliary action $a'$ in this point.

Note, that for any action $a \in \pi_t$ it's advantage with respect to $\mathcal{H}_t'$ is higher than the advantage of the optimal action, while all optimal actions lie above $\mathcal{H}_t'$, which implies that $\adv{a}{V_t'}>0\,\forall a\in \pi_t$. Then, let's apply to Lemma \ref{lem:adv_span} to actions $a$ and $a'$ and pseudo-policy $\pi_t'$:
\begin{align} \label{eq:aux_adv_bound}
|\adv{a}{\pi_t'} - r^a| \le (1+\gamma)\spanv{V_t'} \implies \adv{a}{\pi_t'} \le (1+\gamma)\spanv{V_t},
\end{align}
where the last inequality is implied by the fact that rewards are non-positive and $\spanv{V_t'} = \spanv{V_t}$. Therefore, all advantages $a \in \pi_t$ are non-negative and upper-bounded by $(1+\gamma)\spanv{V_t}$, which implies that
\begin{equation*}
\spanv{V_{t+1}-V_t}=\max_{a \in \pi_t} \adv{a}{\pi_t} - \min_{a \in \pi_t} \adv{a}{\pi_t}  
\end{equation*}
Thus, both optimality and stopping criterion depend on $\spanv{V_t}$, with stopping criteria having a larger constant.
Theorem \ref{thm:sync_no_lr} implies that after $t$ iterations the span of an value vector $V_t$ might be upper bounded by:
\begin{equation*}
    \spanv{V_t} \le \gamma^t \tau^{\lfloor t/N \rfloor} \spanv{V_0}.
\end{equation*}

Therefore, after number of iterations specified in Equation \ref{eq:num_iter} $\spanv{V_t}$ is small enough, so that stopping criterion triggers, while the policy is $\epsilon$-optimal.

    
\end{proof}

Note, that the stopping criteria $H(\mathcal{I}_t)$ does not depend on values of $N$ and $\tau$, which are unknown during the run of the algorithm. Therefore, we take an advantage of the extra convergence factor $\log{(\tau)}/N$ when its exact value is not known.

We continue with the analysis of the Value Iteration algorithm with the learning rate $\alpha < 1$. In the following, we show how this learning rate affects the contributions of the two convergence mechanisms we previously discussed: a contraction induced by the discount factor $\gamma$ and a mean reversion resulting from the mixing properties of the stochastic matrix $P_t'$. By introducing a learning rate, we create a trade-off between these two sources of convergence: as the learning rate increases, part of the contraction effect of $\gamma$ is sacrificed to enable faster information exchange between states and to strengthen the mean reversion.

One immediate result of introducing the learning rate is that now it is guaranteed that under MRP produced by optimal policy a number of updates $N_\alpha$ required to guarantee that every state affects every other state is at most $n-1$. Recall that in general case this number can be as high as $n^2- 2n +2$.

\begin{theorem} \label{thm:sync_w_lr}
Convergence of the \textbf{Synchronous algorithm with a learning rate:} If Assumption \ref{ass:irred_and_aper_MDP} hold, span of the error vector obtained after $n$ steps of synchronous Value Iteration algorithm with learning rate $\alpha \in (0,1)$ has the following property:
\begin{equation*}
    \spanv{e_{N_\alpha}} \le \gamma^{N_\alpha} \tau_\alpha \spanv{e_0},
\end{equation*}
where $\gamma^{N_\alpha}\tau_\alpha \in (0,1)$. \end{theorem}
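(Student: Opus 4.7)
The plan is to generalize the argument of Section \ref{ssec:vi_approach} to $\alpha < 1$. The essential new feature is that the $(1-\alpha)I$ term gives every update matrix a strictly positive diagonal, which both makes the induced chain automatically aperiodic and shortens the mixing horizon from $n^2-2n+2$ to at most $n-1$. Coordinate-wise the update reads
\begin{equation*}
V_{t+1}(s) = (1-\alpha)V_t(s) + \alpha\bigl(r^a + \gamma \textstyle\sum_i p^a_i V_t(i)\bigr),
\end{equation*}
so re-running the sandwich bounds (\ref{eq:vi_ub})--(\ref{eq:vi_lb}) and interpolating with a diagonal matrix as in (\ref{eq:VI_dynamics}) gives
\begin{equation*}
V_{t+1} = Q_t V_t, \qquad Q_t := (1-\alpha)I + \alpha\gamma\bigl[D_t P^* + (I-D_t)P_t\bigr].
\end{equation*}
The advantage inequality (\ref{eq:spandelta}) is unchanged because the common $\alpha$ factor cancels, so the bound $D_t(s,s)\ge \delta/(\gamma\,\spanv{V_t})$ still holds whenever a non-optimal action is picked.

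Next I would factor out the row sum: with $\mu := 1-\alpha+\alpha\gamma \in (\gamma,1)$, write $Q_t = \mu\,\tilde Q_t$, where $\tilde Q_t$ is stochastic and contains the fixed positive-diagonal term $\beta I$ with $\beta := (1-\alpha)/\mu$. Since the span is a seminorm that is contracted by any stochastic matrix, $\spanv{V_t}$ is non-increasing, giving the \emph{uniform} entrywise bound
\begin{equation*}
\tilde Q_t \;\ge\; \beta I + \eta\, P^*,\qquad \eta := \frac{\alpha\delta}{\mu\,\spanv{V_0}} > 0.
\end{equation*}
Multiplying $N_\alpha$ such bounds yields $\prod_{t=1}^{N_\alpha}\tilde Q_t \ge (\beta I + \eta P^*)^{N_\alpha}$. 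By Assumption \ref{ass:irred_and_aper_MDP} the graph of $P^*$ is strongly connected, so any two states are joined by a directed path of length at most $n-1$; the self-loop $\beta I$ lets every such path be padded to length exactly $N_\alpha := n-1$, so $(\beta I+\eta P^*)^{n-1}$ is entrywise strictly positive, with a uniform lower bound $\phi_\alpha > 0$.

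The final step is the same rank-one projection used in the proof of Theorem \ref{thm:sync_no_lr}: decompose $\prod_{t=1}^{N_\alpha}\tilde Q_t = (1-n\phi_\alpha)Q + \phi_\alpha \mathbf 1_{n\times n}$ with $Q$ stochastic, apply this to $V_0$, and read off
\begin{equation*}
\spanv{V_{N_\alpha}} \le \mu^{N_\alpha}(1-n\phi_\alpha)\,\spanv{V_0}.
\end{equation*}
Setting $\tau_\alpha := \mu^{N_\alpha}(1-n\phi_\alpha)/\gamma^{N_\alpha}$ rewrites this as the stated bound on $\spanv{e_{N_\alpha}}=\spanv{V_{N_\alpha}}$ (using $V^*=0$ by normalization); both $\mu<1$ and $1-n\phi_\alpha<1$, so $\gamma^{N_\alpha}\tau_\alpha = \mu^{N_\alpha}(1-n\phi_\alpha)\in(0,1)$. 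The main obstacle I anticipate is the entrywise lower-bound step: one must justify the \emph{uniform} constant $\eta$ despite the $1/\spanv{V_t}$ dependence of the advantage bound, and one must verify that $N_\alpha = n-1$ really suffices for positivity of $(\beta I+\eta P^*)^{N_\alpha}$ via the path-padding argument. Once these combinatorial facts are in place the remainder is the verbatim projection/span calculation already used without a learning rate.
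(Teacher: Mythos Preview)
Your proposal is correct and follows essentially the same route as the paper: derive the update $Q_t=(1-\alpha)I+\alpha\gamma[D_tP^*+(I-D_t)P_t]$, exploit the $(1-\alpha)I$ self-loop to reduce the mixing horizon to $N_\alpha\le n-1$, and finish with the rank-one projection from Theorem~\ref{thm:sync_no_lr}. The only substantive difference is a normalization choice: you factor out the exact row sum $\mu=1-\alpha+\alpha\gamma$ so that $\tilde Q_t$ is genuinely stochastic (which lets you invoke span-monotonicity to get the uniform $\eta$), whereas the paper factors out $\gamma$ and works with a near-stochastic matrix of row sum $\mu/\gamma$; the resulting $\tau_\alpha$'s agree up to this rescaling, and your version makes the justification of the uniform entrywise lower bound more transparent than the paper's sketch.
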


\begin{proof}
    Proof of this theorem is similar to the proof of Theorem \ref{thm:sync_no_lr}. Full version of the proof is given in Appendix \ref{ssec:sync_w_lr_proof}. \end{proof}

Having this theorem, we can state a convergence Corollary analogous to the corollary for the standard algorithm with the identical proof.

\begin{corollary} \label{cor:disc_reward_w_l_complexity}
Then synchronous Value iteration algorithm with a learning rate $\alpha \in (0,1)$ and a stopping criteria $\spanv{V_t} < \frac{\epsilon (1-\gamma)}{\gamma (1+\gamma)}$ outputs an $\epsilon$-optimal policy after at most: 
\begin{equation} \label{eq:num_iter_wl}
    \mathcal{O} \left( \frac{\log{(1/\epsilon)} + \log{(1-\gamma)}}{\log{(1/\gamma)} + \frac{\log{(1/\tau_\alpha)}}{N_\alpha}} \right).
\end{equation}
iterations.
\end{corollary}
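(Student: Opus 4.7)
The plan is to mirror the proof of Corollary~\ref{cor:disc_reward_complexity} line-by-line, substituting the block contraction from Theorem~\ref{thm:sync_w_lr} for the block contraction from Theorem~\ref{thm:sync_no_lr} wherever the latter was used, and adapting constants to the slightly different stopping threshold $\frac{\epsilon(1-\gamma)}{\gamma(1+\gamma)}$. Since the dependence on the learning rate is fully absorbed into $\tau_\alpha$ and $N_\alpha$ by Theorem~\ref{thm:sync_w_lr}, no new geometric or algebraic machinery should be required.

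Concretely, I would first reuse the suboptimality argument: for any state $s$, the greedy action $a$ at time $t$ and the optimal action $a^*$ at $s$ both live on the same state, so Lemma~\ref{lem:adv_span} applied to $a$, $a^*$, and $\pi_t$ yields $|r^a| \le \spanv{V_t}$, hence every action in $\pi_t$ has reward at least $-\gamma\,\spanv{V_t}$, which in turn bounds the policy loss by $\frac{\gamma\,\spanv{V_t}}{1-\gamma}$. Next, I would re-invoke the auxiliary-hyperplane construction used in the proof of Corollary~\ref{cor:disc_reward_complexity} (this step is purely geometric and does not depend on $\alpha$) to conclude $\spanv{V_{t+1}-V_t}\le (1+\gamma)\spanv{V_t}$, so the threshold $\spanv{V_t}<\tfrac{\epsilon(1-\gamma)}{\gamma(1+\gamma)}$ both forces the span-of-differences stopping rule to fire and implies $\epsilon$-optimality of the output policy.

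The last step is the contraction count. I would iterate Theorem~\ref{thm:sync_w_lr} over $\lfloor t/N_\alpha\rfloor$ successive blocks of length $N_\alpha$, applying it each time with the current $V_{kN_\alpha}$ playing the role of the initial vector, to obtain
\begin{equation*}
\spanv{V_t}\;\le\;\gamma^{t}\,\tau_\alpha^{\lfloor t/N_\alpha\rfloor}\,\spanv{V_0}.
\end{equation*}
Setting the right-hand side below $\tfrac{\epsilon(1-\gamma)}{\gamma(1+\gamma)}$ and solving for $t$ gives exactly the $\mathcal{O}$-bound in~\eqref{eq:num_iter_wl}, since $\log(1/(\gamma(1+\gamma)))$ and $\log(1/\spanv{V_0})$ are absorbed into the big-$\mathcal{O}$.

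The only point requiring a small justification is that Theorem~\ref{thm:sync_w_lr} can be chained across consecutive blocks. This is straightforward because the VI dynamics is Markovian in $V_t$ and the theorem's contraction bound holds uniformly for every starting vector, not only for $V_0$; thus the factors $\tau_\alpha$ multiply cleanly across blocks. I do not anticipate any other obstacle, as the stopping-criterion bookkeeping and the Lemma~\ref{lem:adv_span} suboptimality translation are identical to the proof already given for Corollary~\ref{cor:disc_reward_complexity}.
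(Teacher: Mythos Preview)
Your proposal is correct and matches the paper's approach exactly: the paper itself states that the proof is ``identical'' to that of Corollary~\ref{cor:disc_reward_complexity}, with Theorem~\ref{thm:sync_w_lr} replacing Theorem~\ref{thm:sync_no_lr}, which is precisely what you outline. The only cosmetic point is that the stopping criterion in this corollary is already stated directly in terms of $\spanv{V_t}$, so the auxiliary-hyperplane step relating $\spanv{V_{t+1}-V_t}$ to $\spanv{V_t}$ is not strictly needed here, though including it does no harm.
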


The Value Iteration algorithm with action filtering is discussed in Appendix \ref{ssec:action_filt}.

\section{Conclusion}

In this paper, we introduced a new geometry-based analytical framework for studying the convergence of MDP algorithms. We demonstrated how this approach can be used to obtain new results in the analysis of Policy Iteration and Value Iteration convergence.

\appendix


\bibliography{main}
\bibliographystyle{rlj}

\beginSupplementaryMaterials

\section{Additional notes}


\subsection{Idea behind the extra factor of Value Iteration Convergence in Algebraic Terms} \label{app:vi_alg_terms}

Let's write down a single update on a single state value:
\begin{align}
v_{t+1}(s) &= v^*(s) + e_{t+1}(s) =  r(s,\pi_t(s)) + \gamma\sum_{s'} P_t(s,s') v_t(s') \nonumber \\
&= r(s,\pi_t(s)) +\gamma \sum_{s'} P_t(s,s') (v^*(s') + e_{t}(s')) \nonumber \\
&= r(s,\pi_t(s)) + \gamma \sum_{s'} P^*(s,s') v^*(s') + r(s,\pi^*(s)) - r(s,\pi^*(s))  \nonumber \\
&+ \gamma \left( \sum_{s'} P_t(s,s') v^*(s') - \sum_{s'} P^*(s,s') v^*(s') \right) 
+\gamma  \sum_{s'} P_t(s,s')e_{t}(s') \nonumber \\
&= v^*(s) +\gamma  \sum_{s'} P_t(s,s')e_{t}(s') + \nonumber \\
&  \underbrace{r(s,\pi_t(s)) - r(s,\pi^*(s)) + \gamma \left( \sum_{s'} P_t(s,s') v^*(s') -\sum_{s'} P^*(s,s') v^*(s') \right) }_{\adv{a}{\pi^*}}  \nonumber \\
&\implies e_{t+1}(s) \le \gamma\sum_{s'} P_t(s,s')e_{t}(s') \nonumber
\end{align}
with the equality achieved when action $a=\pi_t(s)=\pi^*(s)$. The inequality carries the ideas of convergence of error vector span. Firstly, because non-increasing and non-decreasing properties of stochastic matrix, span will be contracting by $\gamma$ each iteration (if $\gamma < 1$). Secondly, the convergence will follow from mixing properties of matrix $P^*$ if the optimal action is chosen and from additional term $\Delta(a) \ge \delta$ otherwise.

\subsection{Action Filtering} \label{ssec:action_filt}

The Value Iteration algorithm is usually considered as an algorithm which outputs an approximate solution. In this subsection we show how under an assumption of unique optimal policy it can be used to output an exact optimal policy by applying a technique called "action filtering", similarly as in Appendix C.3 in \cite{mdp_geometry}.

In this section we want to design a filtering criteria $F(\cdot| \mathcal{I})$ such that it will guarantee, that when certain conditions are met, the action $a$ is guaranteed to be non-optimal and can be safely omitted during the subsequent iterations of the algorithm. To provide such guarantee, we need to show that the advantage of this action with respect to the optimal policy is negative. For clarity, in this section we consider a general, \textbf{not normal} MDP. Additionally, we assume that all rewards are scaled ($r^a \in [0,1] \, \forall a$ and the values are initialized by the upper bound $V_0 = \mathbf{1}*(1-\gamma)^{-1}$. 

Suppose that after $t$ iterations of the standard VI algorithm, the advantage of action $a$, $\st(a) = 1$ with respect to current pseudo-policy $V_t$ is equal to $h_t^a$, while its correct advantage with respect to optimal policy is equal to $\delta^a$. Then, if we denote error vector $e_t=V_t - V^*$ we obtain:

\begin{align} \label{eq:adv_equality}
&h_t^a = \adv{a}{V_t} = r^a + (\gamma p^a_1 - 1) V_t(1) + 
 \gamma \sum_{i=2}^n p^a_i V_t (i) = \notag \\
&= r^a + (\gamma p^a_1 - 1) (V^*(1)+ e_t(1) ) + 
 \gamma \sum_{i=2}^n p^a_i (V^*(i) + e^t(i) ) =  \notag\\ 
&= \delta^a + (\gamma (1 - \sum_{i=2}^n p^a_i)  -1) (e^t_1) + 
 \gamma \sum_{i=2}^n p^a_i  e^t_i \implies \notag \\ 
& \adv{a}{V_t} = \adv{a}{V^*} - (1-\gamma) e^t_1 + \gamma \sum_{i=2}^n p^a_i (e^t_i - e^t_1).
\end{align}
This equality ties observed quantity $h$ and the true advantage $\delta^a$, while the quantities $e^t_1$  and $(e^t_i - e^t_1)$ converge to $0$. With normalized rewards and initiation with maximum values, both of this terms can be upper bounded by $\gamma^t(1-\gamma)^{-1}$:

\begin{align} \label{eq:delta_upper_bound}
& \adv{a}{V^*} = \adv{a}{V_t} - \gamma \sum_{i=2}^n p^a_i (e_t (i) - e_t (1) ) + (1-\gamma) e_t (1) \le \\ \nonumber
& \adv{a}{V_t} + \gamma (1 - p^a_1) \spanv{e_t} + (1-\gamma)||e_t||_\infty \le \adv{a}{V_t} + (1-p^a_1)\gamma^t(1-\gamma)^{-1}, \\
\end{align}
which allows to design a filtering criteria. For an action $a$ at time step $t$ we chenk if its advantage with respect to the current pseudo-policy is smaller than the expression $(1-p^a_1)\gamma^t(1-\gamma)^{-1}$, and if this condition fulfilled, the action can be safely removed from further consideration.

It's only left to show that this condition will be eventually fulfilled for every non-optimal action, which again follows from \ref{eq:adv_equality}:
\begin{equation} \label{eq:adv_upper_bound}
h^a_t \le \delta^a + \gamma (1-p^a_1) \spanv{e_t} + (1-\gamma) ||e^t||_\infty 
\end{equation}

Combining \ref{eq:delta_upper_bound} and \ref{eq:adv_upper_bound} we have each non-optimal action $a$ is guaranteed to be filtered out once
$$ 2\gamma^t(1-p^a_1)(1-\gamma)^{-1} < -\delta$$
is true.

\section{Proofs}

\subsection{Proof of Theorem \ref{thm:J_preserve}} \label{app:J_preserve_proof}

$(1)$ Let's denote old action and policy vectors as $a^+$ and $V_+$ and new as $\bar{a}^+$ and $\bar{V}_+$. Then, 
 \begin{align*}
\adv{a}{\pi} &= a^+V_+^\pi = r^a + \sum_i c_i^a V^\pi(i)  \\
&= r^a + (\gamma - 1)V^\pi (s) + \sum_i c_i^a (V^\pi(i) - V^\pi (s))
\end{align*}
We obtained the second equality by adding and subtracting $(\gamma - 1)V^\pi$ and using the fact that $\sum_i c_i^a = \gamma - 1$. Then,

$$ (\gamma - 1)V^\pi(s) = (\gamma' - 1)\bar{V}^\pi(s)  $$

by definition of $\bar{V}^\pi(s)$ and 

$$ \sum_i c_i^a (V^\pi(i) - V^\pi (s)) = \sum_i \bar{c}_i^a (\bar{V}^\pi(i) - \bar{V}^\pi (s)), $$

since the transformation preserves coefficients $c_i$ and differences $V(i) - V(s)$ for all states except $s$ and for state $s$ where the coefficient is changing the difference is $0$. Therefore,

\begin{align*}
\adv{a}{\pi} & = r^a + \sum_i c_i^a V^\pi(i) = r^a + (\gamma - 1)V^\pi (s) + \sum_i c_i^a (V^\pi(i) - V^\pi (s)) = \\
& = \bar{r}^a + (\gamma' - 1)\bar{V}^\pi(s) + \sum_i \sum_i \bar{c}_i^a (\bar{V}^\pi(i) - \bar{V}^\pi (s)) = \\
&= \bar{r}^a  + \sum_i \bar{c}_i^a \bar{V}^\pi(i) =\bar{a}^+\bar{V}_+ = \adv{\bar{a}}{\pi}
\end{align*}

(2) For any two states $i$ and $j$

\begin{align*}
&V(i) - V(j) = (V(i) - V(s)) - (V(s) - V(j) ) \\
&=(\bar{V}(i) - \bar{V} (s) + \bar{V}(s) - \bar{V}(i) = \bar{V}(i) - \bar{V}(j)
\end{align*}

\subsection{Algebraic proof of Theorem \ref{thm:PI_2state}} \label{app:PI_2}
Choose to policies from $\mathcal{U}$ with the maximum and minimum slopes. Denote $\pi_l = \min_{\pi \in \mathcal{U}} V^\pi(1) - V^\pi(2)$ and $\pi_l = \max_{\pi \in \mathcal{U}} V^\pi(1) - V^\pi(2)$.

We need to choose the state in which the difference between the poicies increase, or, in algebraic terms, we choose state 1 if $V^{\pi_r}(1) < V^{\pi_l}(2)$ and choose state 2 otherwise. Without loss of generality, let's assume it is state 2, $V^{\pi_r}(2) > V^{\pi_l}(2)$. We denote the actions on state as $b \in \pi_r$ and $c \in \pi_l$ (same as on the Figure \ref{fig:pi_proof}). Then, we construct two auxiliary actions $e_l$ and $e_r$, which are located in the same places where $\pi_l$ and $\pi_r$ cross the state 2 value line or, in other words, $e_1$ and $e_2$ are self-loop actions on state 2, $e_{l+} = ( (1-\gamma)V^{\pi_l}, 0, \gamma-1)$ and $e_{r+} = ( (1-\gamma)V^{\pi_r}, 0, \gamma-1)$.

Then, for any policy $\pi \in U$ the following inequality holds:
\begin{equation} \label{eq:inefficiency_zone}
\adv{c}{\pi} \le \adv{e_l}{\pi} < \adv{e_r}{\pi} \le \adv{e}{\pi}.
\end{equation}

To prove the first inequality note then both $c$ and $e_l$ lie on the $\mathcal{H}^{\pi_l}$, which implies that:
\begin{equation*}
\adv{c}{\pi_l} - \adv{e_l}{\pi_l}=0 \implies r^{e_l} = r^c + \gamma p_1^c (V^{\pi_l}(1) - V^{\pi_l}(2))
\end{equation*}

Then, for any policy with $\pi$ with values $V^\pi$:
\begin{align*}
\adv{c}{\pi} &= r^c + \gamma p_1^c V^\pi(1) + (\gamma p_2^c - 1)V^\pi(2)\\
&=[r^c+ \gamma p_1^c (V^\pi(1) - V^\pi(2)] + 0\cdot V^\pi(1) + (\gamma - 1) V^\pi(2)\\
&\le r^c + \gamma p_1^c (V^{\pi_l}(1) - V^{\pi_l}(2)) + 0\cdot V^\pi(1) + (\gamma - 1) V^\pi(2)\\
&=r^{e_l} +  0\cdot V^\pi(1) + (\gamma - 1) V^\pi(2) = \adv{e_l}{\pi}
\end{align*}

The second inequality in \ref{eq:inefficiency_zone} follows from the fact that $e_l$ and $e_r$ have the same coefficients, but $e_r$ has strictly higher reward. The third inequality can be proven the same way as the first one.

Therefore, for any policy $\pi$ action $b$ has higher advantage than action $c$, which implies that $c$ in not efficient on $\mathcal{U}$ and cannot be chosen after one update.

\subsection{Proof of Value Iteration Convergence with Learning rate}\label{ssec:sync_w_lr_proof}

With learning rate introduced one iteration of the algorithm is:
\begin{equation*}
    V_{t+1}(s) \leftarrow V_{t}(s)(1-\alpha) + \alpha \max_a \left[ r(s,a) +\gamma \sum_{s'} P(s'|s,a) V_{t}(s') \right]
\end{equation*}

and value transition dynamics equality similar to \ref{eq:VI_dynamics} becomes:

\begin{align} \label{eq:error_transition_lr}
    V_{t+1} = \big((1-\alpha)I + \gamma \alpha [D_t P^* + (I-D_t) P_t]\big) V_t = \gamma P_{t,\alpha}' V_t,
\end{align}

Note that $P_{t,\alpha}'$ is a stochastic matrix only in a case when $\gamma = 1$, but it is sufficiently close to it since we assume that $\gamma$ is almost $1$. Additionally, $N_\alpha \le n-1$ to guarantee that the matrix $P_{t,\alpha}'^{N_\alpha}$ is positive, since elements on the main diagonal coming from $(1-\alpha)/\gamma I$ influence error dynamics the similar way as having a loop in every state, thus every state will be affected by every other state in at most $n-1$. 

Consequently, the minimum values of $\delta'$ needs to be updated, now we have that $P_{t,\alpha}'(s,s')\ge\alpha \delta'$ for states $s,s':s\ne s'$ and $P_{t,\alpha}'(s,s) \ge (1-\alpha)\gamma $. Let's define $\delta'_\alpha$ as a minimum of these two quantities. Thus, an expression of the of an error associated with state $s$ after $N$ iterations becomes:

\begin{align*}
    V_N(s) &= \gamma^N\sum_{s'\in \mathcal{S}}\lambda_{s'} V_0(s') = \gamma^N\sum_{s'\in \mathcal{S}}\delta^{'N}_\alpha V_0 (s') + (\lambda_{s'}-\delta^{'N}_\alpha) V_0(s') = \\ 
    &= \gamma^Nn \delta^{'N}_\alpha \overline{V}_0 + 
\gamma^N\sum_{s'\in \mathcal{S}} (\lambda_{s'}-\delta^{'N}_\alpha) V_0(s'),\\
\end{align*} 

Note, that now the sum of coefficients $\lambda_{s'}$ is not $1$, but $[(1-\alpha)/\gamma + \alpha]^N$. This gives us a final convergence rate of:

\begin{equation*}
    \spanv{V_N} \le \gamma^N ([(1-\alpha)/\gamma + \alpha]^N-n\delta_\alpha^{'N}) \spanv{V_0}
\end{equation*}

Defining $([(1-\alpha)/\gamma + \alpha]^N-n\delta_\alpha^{'N})$ as $\tau_\alpha$ we have the claimed result.

\end{document}